\newtheorem{definition}{Definition}
\newtheorem{remark}{Remark}
\newtheorem{lemma}{Lemma}
\definecolor{linkblue}{HTML}{1A4E8A}   
\definecolor{citegreen}{HTML}{2E7D32}  
\definecolor{urlteal}{HTML}{008B8B}    
\definecolor{redish}{HTML}{155179}
\definecolor{blueish}{HTML}{A72725}
\definecolor{tealblue}{HTML}{007C7C}
\definecolor{maroonish}{HTML}{800000}
\definecolor{steelblue}{rgb}{0.27, 0.51, 0.71}
\newcommand{\ts}[1]{\tiny{ ±#1}}
\newcommand{\vx}{\mathbf{x}}
\newcommand{\ours}{\textsc{CoD}\xspace}
\newcommand{\sst}{\texttt{SST2}\xspace}
\newcommand{\sentiment}{\texttt{Sentiment140}\xspace}
\newcommand{\imdb}{\texttt{IMDB}\xspace}
\newcommand{\cola}{\texttt{CoLA}\xspace}
\newcommand{\amazon}{\texttt{Amazon Polarity}\xspace}
\newcommand{\yelp}{\texttt{Yelp}\xspace}
\newcommand{\deberta}{\texttt{DeBERTa-v3}\xspace}
\newcommand{\debertab}{\texttt{DeBERTa-v3-base}\xspace}
\newcommand{\debertas}{\texttt{DeBERTa-v3-small}\xspace}
\newcommand{\debertaxs}{\texttt{DeBERTa-v3-xsmall}\xspace}
\newcommand{\qwen}{\texttt{Qwen2.5}\xspace}
\newcommand{\qwent}{\texttt{Qwen2.5-1.5B}\xspace}
\newcommand{\qwens}{\texttt{Qwen2.5-0.5B}\xspace}
\newcommand{\moon}{\texttt{moons}\xspace}
\DeclareMathOperator*{\argmin}{\arg\,\min}
\title{Few-Shot Knowledge Distillation of LLMs With Counterfactual Explanations}
\author{
Faisal Hamman \quad Pasan Dissanayake \quad Yanjun Fu \quad Sanghamitra Dutta \\
University of Maryland, College Park\\
\texttt{\{fhamman, pasand, yanjunfu, sanghamd\}@umd.edu}
}
\begin{document}

\maketitle

\begin{abstract}
Knowledge distillation is a promising approach to transfer capabilities from complex teacher models to smaller, resource-efficient student models that can be deployed easily, particularly in task-aware scenarios. However, existing methods of task-aware distillation typically require substantial quantities of data which may be unavailable or expensive to obtain in many practical scenarios. In this paper, we address this challenge by introducing a novel strategy called \textbf{Co}unterfactual-explanation-infused \textbf{D}istillation (\ours) for \emph{few-shot task-aware knowledge distillation by systematically infusing counterfactual explanations}. Counterfactual explanations (CFEs) refer to inputs that can flip the output prediction of the teacher model with minimum perturbation. Our strategy \ours leverages these CFEs to precisely map the teacher's decision boundary with significantly fewer samples. We provide theoretical guarantees for motivating the role of CFEs in distillation, from both statistical and geometric perspectives. We mathematically show that CFEs can improve parameter estimation by providing more informative examples near the teacher’s decision boundary. We also derive geometric insights on how CFEs effectively act as knowledge probes, helping the students mimic the teacher's decision boundaries more effectively than standard data. We perform experiments across various datasets and LLMs to show that \ours outperforms standard distillation approaches in few-shot regimes (as low as 8 - 512 samples). Notably, \ours only uses half of the original samples used by the baselines, paired with their corresponding CFEs and still improves performance. Our code is available at \url{https://github.com/FaisalHamman/CoD}.
\end{abstract}

\section{Introduction}
Large Language Models (LLMs) have demonstrated state-of-the-art performance across a broad spectrum of tasks~\cite{openai2024gpt4technicalreport, qwen2025qwen25technicalreport, deepseek2005deepseekr1}. However, as the size of LLMs grow, so does the associated computational burden, making them difficult to deploy in resource-constrained environments, e.g., mobile phones, edge devices, and embedded systems~\cite{girija2025optimizing}. The challenge, therefore, lies in making large models more efficient and accessible without sacrificing performance. To this end, knowledge distillation (KD) (initially proposed in \cite{hintonDistillation}; see surveys~\cite{xu2024KDSurvey,gouKDSurvey,sucholutsky2023getting}) has emerged as a powerful technique for model compression, enabling smaller student models to mimic the performance of a larger teacher model. In the context of LLMs, KD plays a central role in transferring the broad capabilities such as natural language understanding~\cite{hahn2019self}, reasoning~\cite{deng2023implicit}, instruction following~\cite{yue-etal-2024-distilling} onto smaller models.

While LLMs are trained for a broad range of tasks, we may often want a smaller, task-specific language model when full task coverage is not required, particularly in resource-constrained environments. To support this, \emph{task-aware knowledge distillation}~\cite{taskAware, jiao2019tinybert} has been proposed to selectively transfer task-relevant knowledge from teacher to student models. While effective, these methods typically assume access to large datasets~\cite{fang2025knowledge}. However, in many real-world applications, the amount of data available is often limited~\cite{xue2024towards,zhang2020knowledge, liu2020metadistiller, li2020few}, and obtaining high-quality human-annotated data is expensive. Despite advances on algorithmic strategies for task-aware KD in LLMs~\cite{fang2025knowledge}, the problem of data selection for KD has received limited interest, particularly in few-shot settings. In this work, we study \emph{few-shot and task-aware knowledge distillation for LLMs}, where student models are distilled from teacher models using a very small number of samples labeled for a task (also called shots). Few-shot task-aware distillation remains underexplored for LLMs. In classical ML, few-shot training has poor generalization~\cite{zhang2017understandings}, and thus causes ineffective distillation due to insufficient task coverage~\cite{vinyals2016matching, snell2017prototypical}. However, few-shot distillation holds potential for LLMs because they are pretrained on a large corpora, also drawing inspiration from the prior success of few-shot learning~\cite{brown2020languagemodelsfewshotlearners}.

In this work, we propose a few-shot task-aware knowledge distillation strategy by systematically integrating a type of posthoc explainability technique called counterfactual explanations (CFEs)~\cite{wachter2017counterfactual,verma2022counterfac}. CFEs are inputs that can flip the output prediction of a model with minimum perturbations. We find that CFEs can act as knowledge probes, helping the students mimic the teacher's decision boundaries more effectively than standard data. Our work bridges explainability and model compression by turning explanations into actionable training signals, guiding the student into learning the teacher's decision-making process more effectively. This results in more faithful knowledge transfer even with very limited data. Our contributions can be summarized as follows:

\begin{itemize}[leftmargin=*, topsep=0pt,itemsep=0pt]
\item \textbf{A counterfactual explanation-based strategy for few-shot distillation.} We propose a novel framework \ours, short for \textbf{Co}unterfactual-explanation-infused \textbf{D}istillation, for task-aware knowledge distillation under few-shot regimes. By enriching the few-shot training set with CFEs, we improve the student’s ability to mimic the fine-grained details of the teacher's decision boundary with fewer labeled examples. We validate this intuition through a synthetic experiment on the 2D \moon dataset, showing that CFE-infused distillation better replicates the teacher’s decision surface compared to using standard few-shot samples (see Fig.~\ref{fig:system-model-intuition}\& Fig.~\ref{fig:cfx-boundary}).

\item \textbf{Theoretical guarantees motivating the role of CFEs in distillation.} We provide theoretical guarantees that serve as motivation for our approach, from both statistical and geometric perspectives. First, in a logistic regression setting, we show that CFEs improve parameter estimation by maximizing the Fisher Information (see  Def.~\ref{def:fisher} \& Thm.~\ref{thm:statistical}). Our proof specifically leverages the fact that the CFEs lie quite close to the decision boundary to show that they reduce the expected estimation error of the student model compared to standard distillation. Next, moving beyond statistical guarantees and linear models, we also provide a geometric analysis for non-linear models, establishing that if a student matches the teacher's predictions on the original data and their counterfactual pairs, then their decision boundaries will remain close: this is quantified by a provably small \emph{Hausdorff distance}, a formal measure of distance between two subsets within a space (see Def.~\ref{def:hausdorff} \& Thm.~\ref{thm:alpha+eps}).

\item \textbf{Empirical validation.} We evaluate \ours on six benchmark datasets 
using \deberta~\cite{he2021debertav3} and \qwen~\cite{qwen2025qwen25technicalreport} model families. We compare against strong baselines including standard Knowledge Distillation (KD)~\cite{hintonDistillation}, Layer-wise Distillation (LWD)~\cite{jiao2019tinybert}, and Task-aware layer-wise Distillation (TED)~\cite{taskAware} under various few-shot settings ($k$ = 8, 16, 32, 64, 128, and 512). Our results demonstrate that \ours consistently outperforms baselines in few-shot regimes, with significant improvements in extremely data-scarce scenarios ($k$ $\leq$ $64$).  Notably, \ours only uses \emph{half of the original labeled samples used by the baselines} (i.e., $k/2$ original infused with their corresponding $k/2$ CFEs, leading to $k$ shots), and still gives improved performance. For instance, with $k=8$ samples on \imdb dataset, LWD + \ours improves over standard LWD by more than 10 points. 
\end{itemize}

\textbf{Related Works:}
Knowledge distillation has emerged as a powerful framework for model compression~\cite{hintonDistillation}. While early works focused on transferring soft labels via output logits~\cite{gu2023minillm}, subsequent advances explored richer supervision signals such as intermediate feature alignment~\cite{jiao2019tinybert,sun2019patient, romero2014fitnets,dissanayake2024quantifying}. As LLMs grow in size and inference cost~\cite{hoffmann2022training,kaplan2020scaling}, distillation has become increasingly important for transferring capabilities into smaller models~\cite{fang2025knowledge,xu2024KDSurvey,gouKDSurvey,sucholutsky2023getting}. More recently, task-aware knowledge distillation for LLMs has gained traction, aiming to selectively distill knowledge relevant to a specific downstream task~\cite{taskAware, liu2024ddk}. 
Despite these algorithmic innovations~\cite{taskAware}, there has been relatively little focus on data selection for distillation, particularly in few-shot settings. Most prior works assume ample training data, leaving few-shot knowledge distillation largely underexplored. While some works~\cite{vinyals2016matching, zhang2020knowledge, liu2020metadistiller, li2020few} have studied distillation in classical ML under low-data regimes, they do not address the challenges specific to distilling LLMs. In this work, we establish the paradigm of few-shot distillation in LLMs by integrating explainable data selection. Our work is broadly aligned with the spirit of data-efficient ML, which aims to improve performance under limited supervision~\cite{chai2023goodcore,sachdeva2024train, kumari2023data, hamman2025quantifyingicml}.

Counterfactual explanations (CFEs)~\cite{wachter2017counterfactual,verma2022counterfac, karimi_survey,mishra2021survey,pmlr-v202-hamman23a,hamman2024robust,pawelczyk2020counterfactual,kanamori2020dace,poyiadzi2020face} have been widely studied in classical ML, particularly for algorithmic recourse in high-stakes applications such as finance, healthcare, and law. An interesting work~\cite{dissanayake2024model} uses CFEs for model reconstruction by deriving theoretical relationships between reconstruction error and the number of counterfactual queries using polytope theory. In the natural language domain, some methods have been proposed to generate semantically valid CFEs using either token-level perturbations~\cite{nguyen2024llmsg} or controlled generation with language models~\cite{mcaleese2024compa, bhattacharjee2024zeros,nguyen2024llmsg}, but they have not been integrated for knowledge distillation. Another line of work is counterfactual reasoning in causal inference~\cite{pearlCFX}, where the goal is to estimate the effect of interventions under a structural causal model, which is different from our objectives. Counterfactual data have been used to address the issue of spurious patterns~\cite{kaushik2019learning, kaushik2020explaining,egea2025vision}, improve generalization~\cite{wu2021polyjuice, chen2022disco}, and enhance performance on out-of-distribution data~\cite{sachdeva2023catfood,feng2024teaching}. In contrast, our work studies the role of CFE in few-shot task-aware distillation, specifically aligning the student with the teacher’s outputs to mimic the teacher’s decision boundary more effectively.

\begin{figure}[t]
    \centering   \includegraphics[height=3.3cm]{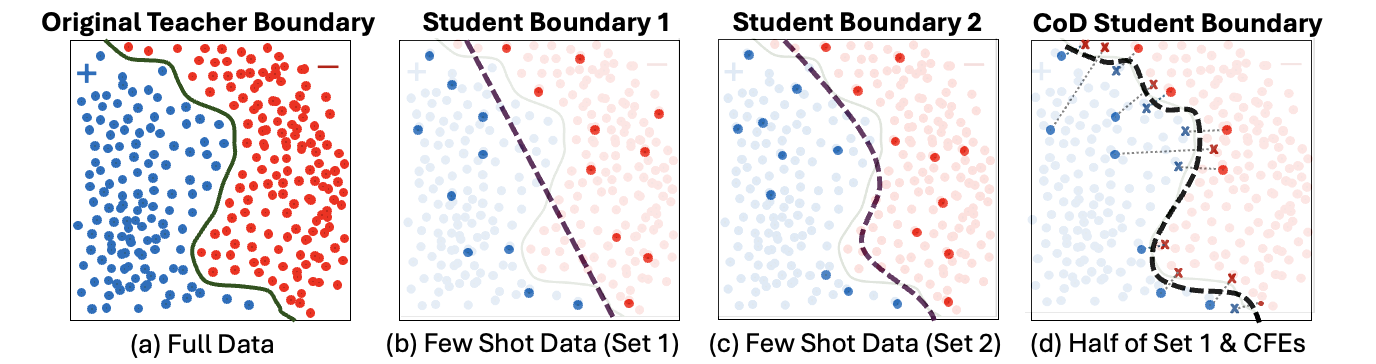}
    \caption{\small \textbf{Intuition behind our approach}: \textit{(a)} Teacher trained on the full dataset with true decision boundary. \textit{(b–c)} With few-shot supervision, many classifiers can fit the sparse points; the resulting student boundaries (dashed lines) can vary and do not always align with the teacher's boundary (unfaithful distillation). \textit{(d)} Pairing each point with its CFE (×, linked to originals) during distillation makes the student match the teacher's soft predictions at these points. CFEs act as boundary-near pegs that clamp the student to the teacher’s decision surface, producing a more faithful distillation even under few-shot budgets.}
    \label{fig:system-model-intuition}
\end{figure}

\section{Preliminaries}
LLMs are highly effective for natural language processing. Built upon the transformer architecture~\cite{Vaswani2017attention}, LLMs consist of multiple stacked layers, each containing a multi-head self-attention mechanism followed by a position-wise feed-forward neural network.  Let $g(\cdot; \theta)$ denote a transformer-based model parameterized by $\theta$. The model takes an input sequence $\vx  \in  \mathcal{X}$ where $\mathcal{X}$ is the input space. The model output is a probability distribution over the vocabulary space, but for task-aware settings such as sentiment analysis, it is a probability distribution over $C$ class labels, i.e., $g: \mathcal{X} \rightarrow [0,1]^C$. The loss function is defined as: $
\mathcal{L}(\theta) = \mathbb{E}_{\vx  \sim \mathcal{X}}[\ell(g(\vx ; \theta))],
$ where $\ell$ denotes the task-specific loss, such as cross-entropy for classification tasks or causal language modeling loss for generative models.

\textbf{Knowledge Distillation (KD).}
KD is a technique that transfers knowledge from a large, pre-trained teacher model to a smaller, student model~\cite{hinton2015distillingknowledgeneuralnetwork}. Let $g_t(\cdot;\theta_t)$ be the teacher model with parameters $\theta_t$ and $g_s(\cdot;\theta_s)$ be the student model with parameters $\theta_s$. The teacher model $g_t(\cdot;\theta_t)$ provides soft labels to assist in training the student model $g_s(\cdot;\theta_s)$.  The student is trained using a loss function that is a combination of the task-specific loss and the distillation loss as follows: $\min_{\theta_s} \mathcal{L}(\theta_s) + \alpha \mathcal{L}_{\text{KD}}(\theta_t,\theta_s).$ Here, $\mathcal{L}(\theta_s)$ is the task-specific loss, e.g., the cross-entropy loss between the student's outputs and true-labels, and $\mathcal{L}_{\text{KD}}(\theta_t,\theta_s)=\mathbb{E}_{\vx  \sim \mathcal{X}}[d(g_t(\vx; \theta_t),g_s(\vx; \theta_s))]$ is the distillation loss which captures the distance between the outputs of the teacher and student. Typically, the distance is computed using the Kullback-Leibler (KL) divergence, i.e.,  $\text{KL}(g_t(\vx; \theta_t) \,\|\, g_s(\vx; \theta_s)) =\sum_{c=1}^C g_t^{(c)}(\vx; \theta_t) \log \frac{g_t^{(c)}(\vx; \theta_t)}{g_s^{(c)}(\vx; \theta_s)},$ where the superscript $(c)$ is for the assigned probability for class $c$ by each model. 


\textbf{Layer-Wise Distillation (LWD).}
In large transformer-based models, the teacher’s outputs may not fully capture the knowledge embedded in intermediate layers. Beyond matching final outputs, one can also align the intermediate features of the teacher and student~\cite{jiao2019tinybert}. At a few selected layers, the teacher’s hidden activations $h_t^l$ and the student’s activations $h_s^l$ (optionally projected into the same dimension) are computed and their difference is also penalized using a mean‑squared‑error loss~\cite{jiao2019tinybert}. The student is trained using a loss as follows:
\begin{equation}
\min_{\theta_s}
\;\;
\mathcal{L}(\theta_s)
\;+\;
\alpha\;\mathcal{L}_{\text{KD}}(\theta_t,\theta_s)
\;+\;
\beta\; \mathcal{L}_{\text{LWD}}(\theta_t,\theta_s)
\end{equation}
Here, $\mathcal{L}_{\text{LWD}}(\theta_t,\theta_s)$ is the additional layer-wise alignment term added alongside the task-specific loss and distillation loss, e.g.,  $\mathbb{E}_{\vx  \sim \mathcal{X}}[\sum_{l\in\mathcal{I}}\|\,h_t^l - \,h_s^l\|_2^2]$ where $\{h_t^l,h_s^l\}_{l\in \mathcal{I}}$ are the teacher and student activations for a given input $\vx$ over a set $\mathcal{I}$ of layers, and $\alpha,\beta\ge0$ balance the three objectives.

\begin{figure}[t]
    \centering   \includegraphics[height=3.8cm]{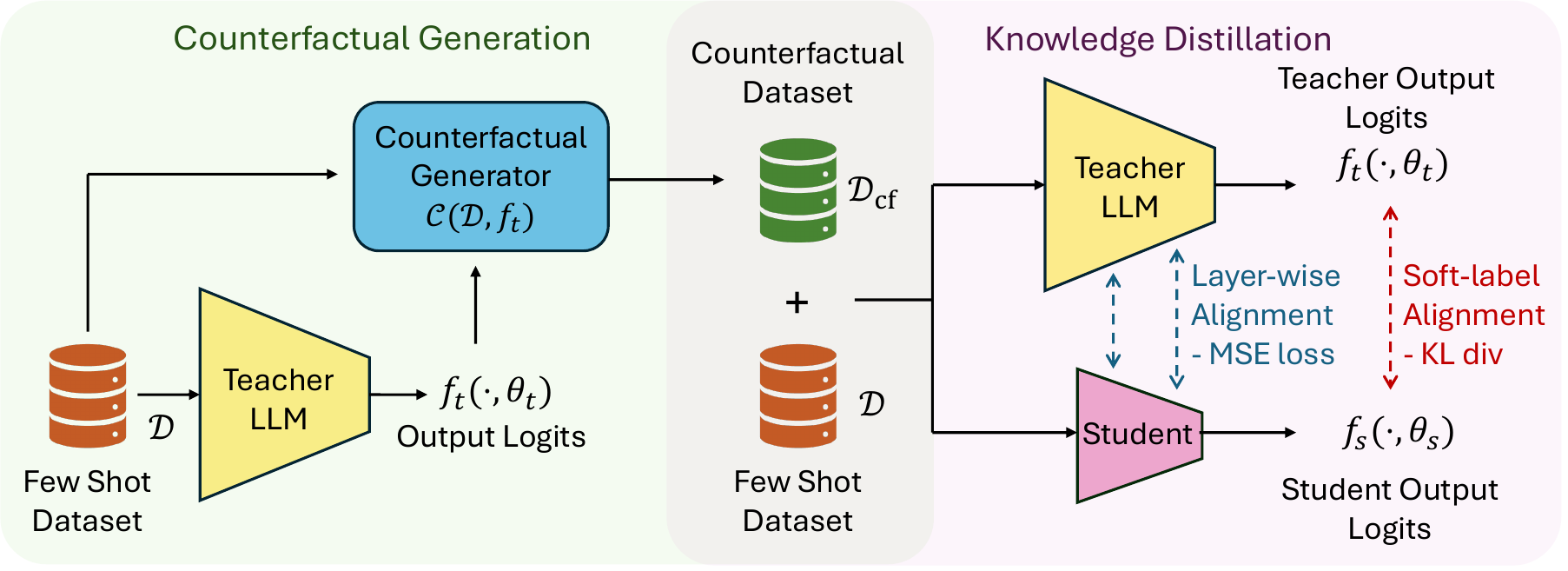}
    \caption{\small Overview of our framework: \textbf{Co}unterfactual Explanation-Infused \textbf{D}istillation (\ours).}
    \label{fig:system-model}
\end{figure}
\textbf{Counterfactual Explanations (CFEs).}
Given a model’s decision on an input $\vx$, a CFE~\cite{wachter2017counterfactual,verma2022counterfac} finds the minimal modification $\vx'$ such that the model’s output changes in a desired way. 
These explanations help interpret model decisions and provide actionable guidance to users to flip the prediction. In our context, we look into CFEs in the NLP domain where the inputs are token sequences. A counterfactual in this setting is a minimally perturbed sentence that causes the teacher LLM's prediction to flip. For instance, given the sentence \emph{I \textcolor{steelblue}{loved} the movie}, labeled as positive sentiment, a CFE would be \emph{I \textcolor{maroonish}{hated} the movie}, a semantically similar but sentiment-flipped variant.

\textbf{Our Problem Setting.}
We consider a binary classification setting where the teacher model will be denoted as $f_t:\mathcal{X} \to [0,1]$. The input space $\mathcal{X} \subseteq \mathbb{R}^{n \times d}$, with $n$ being the sequence length and $d$ is the model dimension, after the entire input sequence has already been passed through the tokenizer and embedding layers of the LLM. The teacher model $f_t(\vx)$ gives the class-1 probability output of the model for input $\vx$, i.e., $f_t(\vx) := g^{(1)}_t(\vx; \theta_t)$, where the superscript (1) is for the assigned probability for class 1. The final predicted class is given by $\hat{f}_t(\vx) = \mathbb{I}\left[f_t(\vx) \geq 0.5\right] \in \{0,1\}$. 

\begin{definition}[Closest CFE $\mathcal{C}(\vx, f_t)$]\label{def:closestCfx}
Given $\vx \in \mathbb{R}^{n \times d}$ such that $f_t(\vx) < 0.5$, the closest CFE is a point $\vx' \in \mathbb{R}^{n\times d}$ with opposite prediction that minimizes the Frobenius-norm $\|\vx - \vx'\|_F$:
\begin{equation}
\mathcal{C}(\vx, f_t) = \underset{\vx' \in \mathbb{R}^{n\times d}}{\argmin}\|\vx - \vx'\|_F \text{ such that } f_t(\vx') \geq 0.5.
\end{equation}
\end{definition}
\cref{def:closestCfx} naturally extends to multiclass settings, where a CFE can be defined as the minimum perturbation that changes the predicted class to any other target class.

\begin{remark}[Data Manifold Counterfactual Explanations]
In practice, unconstrained counterfactuals may lead to unrealistic or out-of-distribution examples. To address this, we can constrain $\vx'$ to lie within the data manifold $\mathcal{X'} \subseteq \mathbb{R}^{n \times d}$, ensuring that generated counterfactuals remain semantically plausible. These \emph{data-manifold counterfactuals} preserve natural language structure. In our work, we use a hybrid generation strategy that combines LLM-based prompting with teacher model feedback to generate such data-manifold CFEs. Further details are provided later in Section~\ref{sec:method}.
\end{remark}
Given a training data budget $k$ (few-shots) and a teacher model $f_t$, our \textbf{goal} is to distill a smaller student model $f_s:\mathcal{X} \to [0,1]$ with high-performance at a specific task by leveraging CFEs.

\section{Main Contributions}
\label{sec:method}
We begin with an experiment on 2D synthetic data that demonstrates how CFEs help student models mimic the teacher's decision boundary more effectively than standard data. Next, we provide theoretical results motivating our approach from both statistical and geometric perspectives. Finally, we describe our CFE generation pipeline for natural language inputs, which leverages LLMs to produce semantically plausible CFEs, leading to our proposed framework \ours. 

\textbf{Synthetic Dataset Experiments to Illustrate the Role of CFE in Distillation:} We conduct experiments on the 2D \moon dataset~\cite{scikit-learn} and show that infusing few-shot data with CFEs significantly improves student-teacher alignment in distillation (see~\cref{fig:cfx-boundary}). We train a \textit{teacher} model---a two-layer neural network with architecture $[2 \rightarrow 64 \rightarrow 64 \rightarrow 2]$ on the full dataset. The \textit{student} network has a smaller architecture $[2 \rightarrow 16 \rightarrow 2]$.
We randomly sample $k=20$ original points (10 per class). For the original points, we compute their closest CFE, a minimally perturbed input that flips the teacher's predicted class. We follow a gradient-based method~\cite{wachter2017counterfactual} to compute CFEs by perturbing each point in the direction of the teacher's logit margin until the predicted class flips. 
We consider two student models: one trained on the $k$ few-shot samples alone, and another trained on $k/2$ few-shot samples and their CFEs. In both cases, we perform knowledge distillation by minimizing a combination of cross-entropy loss on the hard labels and KL-divergence between the student and teacher soft predictions.
~\cref{fig:cfx-boundary} shows the decision boundaries of the teacher, the baseline student, and the CFE-infused student. CFEs cluster near the decision boundary, enriching the distillation data in high-uncertainty regions. The student trained with CFEs aligns more closely with the teacher, thus motivating the use of boundary-targeted examples for improved knowledge distillation.

\begin{figure}[t]
    \centering
    \includegraphics[height=3.9cm]{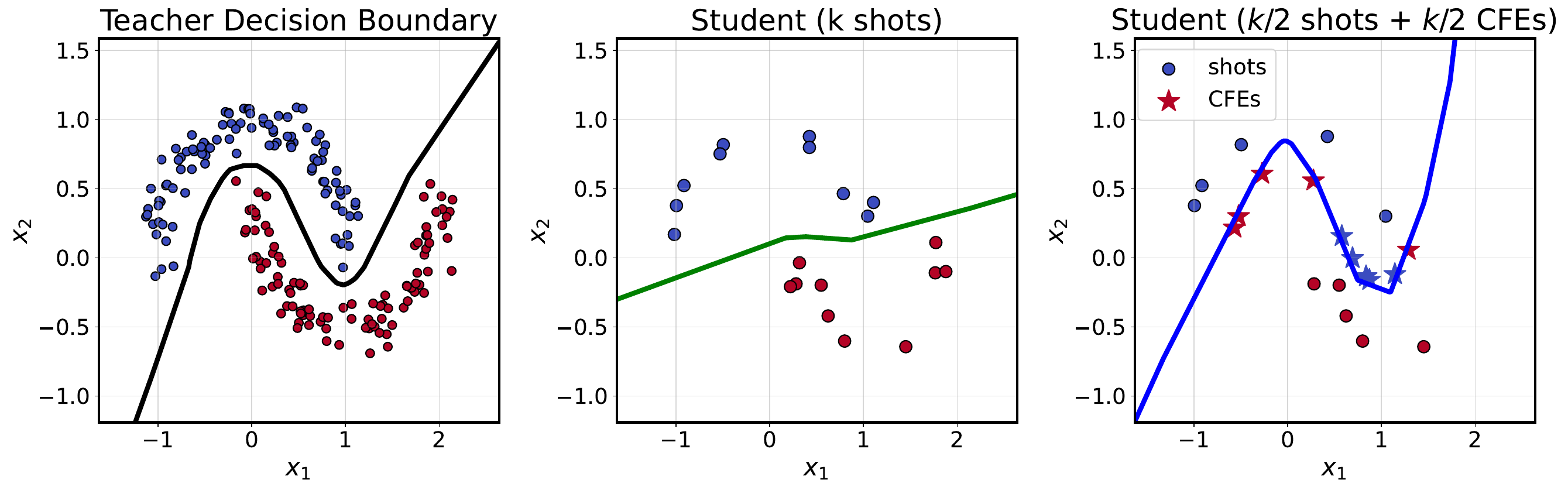}
    \caption{\small {\textbf{Decision boundaries for teacher and two students trained on synthetic data under few-shot setting.}}
 The teacher \textit{(a)} is trained on the full dataset and serves as the distillation target. Student \textit{(b)} is distilled using 20 randomly sampled data points, and results in a poorly aligned decision boundary with the teacher. Student \textit{(c)} is also trained on 20 total samples, 10 original data points and their 10 CFEs. This student learns a better decision boundary that aligns more closely with the teacher. This is because the CFEs lie close to the teacher's decision boundary and the KD loss encourages the student to match the teacher's soft predictions at the CFEs, clamping the student's boundary to the teacher's boundary.}
    \label{fig:cfx-boundary}
\end{figure}

\textbf{Statistical Guarantees Motivating Our Approach:} Here, we provide a theoretical motivation for the use of CFEs in few-shot knowledge distillation. We analyze a logistic regression setting using a measure from estimation theory called Fisher Information~\cite{lehmann1998theory} (also see~\cref{def:fisher}) that captures the information contained by a random variable about a parameter to be estimated. We show that a dataset containing CFEs, which essentially lie much closer to the teacher’s decision boundary, yields a Fisher Information Matrix with higher overall information content for parameter estimation. As a result, the student’s expected estimation error is lower compared to training on standard samples. 

\begin{restatable}[Fisher Information Matrix~\cite{lehmann1998theory}]{definition}{fisherdef}\label{def:fisher}
Let $\mathcal{L}(\theta)$ be the log-likelihood of a parametric distribution $p(y, x; \theta)$, where $\theta$ is the parameter vector to be estimated. The \emph{Fisher Information Matrix} (FIM) at parameter $\theta$ is defined as:
\[
\mathcal{I}(\theta) = \mathbb{E}_{\vx, y } \left[ \nabla_\theta \log p(y, \vx; \theta) \, \nabla_\theta \log p(y, \vx; \theta)^\top \right].
\]
\end{restatable}

Intuitively, Fisher Information measures the curvature of the log-likelihood: flatter regions (low curvature) imply high uncertainty in estimating $\theta$, while sharper regions (high curvature) indicate that small changes in $\theta$ cause large changes in likelihood, enabling more precise parameter estimation.

We consider a binary classification setting where both the teacher and student are logistic regression models. Suppose the teacher, parameterized by $\mathbf{w}_t$, defines the true data-generating distribution with predicted probabilities $p_t(y=1 | \mathbf{x}) = \sigma(\mathbf{w}_t^\top \mathbf{x})$ where $\sigma(\cdot)$ is the softmax function. Suppose, the student, with parameters $\mathbf{w}_s$, is obtained via maximum likelihood estimation (MLE)~\cite{lehmann1998theory} using either a standard dataset $\mathcal{D}$ or a CFE-infused dataset $\mathcal{D}_{\mathrm{cf}}$. Since the CFEs lie close to the teacher's decision boundary, we have $\mathbf{w}_t^\top \mathbf{x}_c \approx 0$ when $\vx_c$ is a CFE.


\begin{restatable}[CFEs Improve Model Parameter Estimation]{theorem}{cfeimprovestat} \label{thm:statistical}
Let $\mathbf{w}_s$ and $\mathbf{w}_s^{\mathrm{(cf)}}$ be the student parameters obtained via MLE on $\mathcal{D}$ (standard) and $\mathcal{D}_{\mathrm{cf}}$ (CFE-infused). Assuming the teacher’s parameters $\mathbf{w}_t$ capture the true data-generating distribution, that CFEs lie near the decision boundary, and that the second moments $\mathbb{E}_{\mathbf{x}}[\mathbf{x}\mathbf{x}^\top] \approx \mathbb{E}_{\mathbf{x}_c}[\mathbf{x}_c\mathbf{x}_c^\top]$. Then estimation error satisfies:
$$
\mathbb{E} \left[ \|\mathbf{w}_s^{\mathrm{(cf)}} - \mathbf{w}_t\|^2 \right] < \mathbb{E} \left[ \|\mathbf{w}_s - \mathbf{w}_t\|^2 \right].$$
\end{restatable}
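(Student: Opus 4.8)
The plan is to reduce the comparison of the two estimation errors to a comparison of Fisher Information Matrices via the asymptotic theory of maximum likelihood estimation, and then to exploit the fact that the per-sample Fisher information of logistic regression is maximized exactly at the decision boundary, which is precisely where the CFEs reside.

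First I would invoke the standard asymptotic normality of the MLE. Since the teacher defines the true data-generating distribution and the student shares the same logistic form, the model is well-specified, so the MLE $\hat{\mathbf{w}}$ from $N$ i.i.d.\ samples is asymptotically unbiased with
$$\sqrt{N}\,(\hat{\mathbf{w}} - \mathbf{w}_t) \xrightarrow{d} \mathcal{N}\!\bigl(0,\, \mathcal{I}(\mathbf{w}_t)^{-1}\bigr).$$
Consequently the expected squared error is controlled by the trace of the inverse information, $\mathbb{E}[\|\hat{\mathbf{w}} - \mathbf{w}_t\|^2] \approx \tfrac{1}{N}\operatorname{tr}\bigl(\mathcal{I}(\mathbf{w}_t)^{-1}\bigr)$. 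This reduces the theorem to showing $\operatorname{tr}\bigl(\mathcal{I}_{\mathrm{cf}}(\mathbf{w}_t)^{-1}\bigr) < \operatorname{tr}\bigl(\mathcal{I}(\mathbf{w}_t)^{-1}\bigr)$, where $\mathcal{I}$ and $\mathcal{I}_{\mathrm{cf}}$ are the FIMs computed under the input marginals of $\mathcal{D}$ and $\mathcal{D}_{\mathrm{cf}}$ respectively.

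Next I would compute the FIM directly from Def.~\ref{def:fisher}. Writing $\mu(\mathbf{x}) = \sigma(\mathbf{w}_t^\top \mathbf{x})$ for the class-1 probability, the per-sample score calculation yields information $\mu(\mathbf{x})(1-\mu(\mathbf{x}))\,\mathbf{x}\mathbf{x}^\top$, so
$$\mathcal{I}(\mathbf{w}_t) = \mathbb{E}_{\mathbf{x}}\!\bigl[\mu(\mathbf{x})(1-\mu(\mathbf{x}))\,\mathbf{x}\mathbf{x}^\top\bigr].$$
The scalar weight $\mu(1-\mu)$ is the Bernoulli variance, attaining its maximum $1/4$ exactly when $\mathbf{w}_t^\top\mathbf{x} = 0$. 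The CFE assumption $\mathbf{w}_t^\top\mathbf{x}_c \approx 0$ therefore drives the CFE weights to this maximum, giving $\mathcal{I}_{\mathrm{cf}}(\mathbf{w}_t) \approx \tfrac{1}{4}\,\mathbb{E}_{\mathbf{x}_c}[\mathbf{x}_c\mathbf{x}_c^\top]$, whereas standard samples only satisfy $\mu(1-\mu) \le 1/4$ pointwise.

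The key step, and the main obstacle, is establishing the Loewner ordering $\mathcal{I}_{\mathrm{cf}}(\mathbf{w}_t) \succeq \mathcal{I}(\mathbf{w}_t)$ cleanly through the approximations. Because $\bigl(\tfrac{1}{4} - \mu(\mathbf{x})(1-\mu(\mathbf{x}))\bigr)\mathbf{x}\mathbf{x}^\top \succeq 0$ for every $\mathbf{x}$, taking expectations gives $\mathcal{I}(\mathbf{w}_t) \preceq \tfrac{1}{4}\mathbb{E}_{\mathbf{x}}[\mathbf{x}\mathbf{x}^\top]$; the second-moment assumption $\mathbb{E}_{\mathbf{x}}[\mathbf{x}\mathbf{x}^\top] \approx \mathbb{E}_{\mathbf{x}_c}[\mathbf{x}_c\mathbf{x}_c^\top]$ then upgrades this to $\mathcal{I}(\mathbf{w}_t) \preceq \mathcal{I}_{\mathrm{cf}}(\mathbf{w}_t)$, with strictness on the relevant subspace since the standard data is not concentrated on the boundary and hence has weight strictly below $1/4$ on a positive-measure set. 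I would then apply the operator-antitonicity of matrix inversion (if $A \succeq B \succ 0$ then $B^{-1} \succeq A^{-1}$) to obtain $\mathcal{I}_{\mathrm{cf}}(\mathbf{w}_t)^{-1} \preceq \mathcal{I}(\mathbf{w}_t)^{-1}$, and take traces to conclude $\operatorname{tr}\bigl(\mathcal{I}_{\mathrm{cf}}^{-1}\bigr) < \operatorname{tr}\bigl(\mathcal{I}^{-1}\bigr)$, which combined with the first paragraph yields the claimed strict inequality. The delicate points are tracking strictness through the two $\approx$ assumptions and justifying that the asymptotic MLE covariance faithfully governs the finite-sample expected error.
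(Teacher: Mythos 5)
Your proposal is correct and follows essentially the same route as the paper's proof: both compute the logistic-regression Fisher information as $\mathbb{E}[\sigma(\mathbf{w}_t^\top\mathbf{x})(1-\sigma(\mathbf{w}_t^\top\mathbf{x}))\,\mathbf{x}\mathbf{x}^\top]$, observe the Bernoulli-variance weight peaks at $1/4$ on the decision boundary where CFEs sit, use the second-moment assumption to obtain the Loewner dominance $\mathcal{I}_{\mathrm{cf}} \succ \mathcal{I}$, and conclude via asymptotic MLE normality together with the trace/inversion antitonicity of positive definite matrices. Your closing remarks about tracking strictness through the approximations and about the finite-sample validity of the asymptotic covariance identify exactly the informal steps the paper also glosses over.
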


\textit{Proof Sketch:}
The key step in our proof relies on showing that the Fisher Information is given by $\mathcal{I}(\mathbf{w}_t; \mathcal{D}) = \sum_i p_t(y=1|\mathbf{x}_i)(1 - p_t(y=1| \mathbf{x}_i)) \mathbf{x}_i \mathbf{x}_i^\top$. The scalar weight $p_t(y=1|\mathbf{x})(1 - p_t(y=1|\mathbf{x}))$ is maximized when $p_t(y=1|\mathbf{x}) = 0.5$, i.e., $\mathbf{x}$ lies on the decision boundary. Standard samples in few-shot settings typically lie far from the boundary and contribute little to the FIM, whereas CFEs are constructed to lie near it and thus contribute significantly more. As a result, the FIM of the CFE-infused dataset $\mathcal{D}_{\mathrm{cf}}$ dominates that of the standard dataset $\mathcal{D}$ in Loewner order~\cite{bhatia2009positive} (i.e., $\mathcal{I}(\mathbf{w}_t; \mathcal{D}_{cf}) \succ \mathcal{I}(\mathbf{w}_t; \mathcal{D}))$. The CFE-infused dataset provides strictly more information for parameter estimation than the standard dataset, ultimately leading to the bound on expected estimation error. 


The full proof is in Appendix~\ref{apx:thm1_proof}. Notably, while this result mathematically motivates the  advantages of CFEs in few-shot distillation, it still assumes linear models and same student-teacher capacity (size). For more general non-linear settings, we provide a geometric perspective as discussed next.

\textbf{Geometric Insight for Using CFEs for Distillation:}
 Here, we examine the geometric effect of CFEs on student-teacher alignment in non-linear settings. Specifically, we show that when data points and their CFE pairs are included during distillation, the student’s decision boundary comes much closer to the teacher’s boundary, as quantified by a formal measure called \emph{Hausdorff distance}~\cite{rockafellar1998variational} between their respective decision surfaces. The Hausdorff distance (see~\cref{fig:hausdorff}) captures the worst-\begin{wrapfigure}[12]{r}{3.1cm}
\includegraphics[width=3cm]{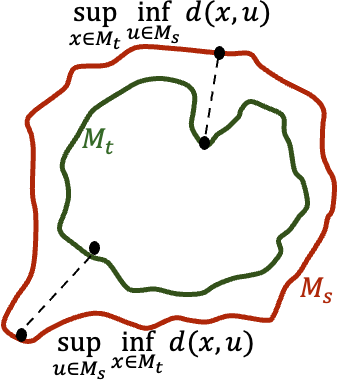}
\caption{{Hausdorff Distance. \label{fig:hausdorff}}}
\end{wrapfigure}case discrepancy between two sets (in our case, the decision boundaries of the teacher and student models) by quantifying how far any point on one boundary is from the closest point on the other. 
 
Let $\mathcal{M}_t {=} \{ \vx \in \mathbb{R}^{n \times d} \mid f_t(\vx) = 0.5 \}$ and $\mathcal{M}_s {=} \{ \vx \in \mathbb{R}^{n \times d} \mid f_s(\vx) = 0.5 \}$ denote the decision boundaries of the teacher and the student. Our goal is to examine how close is the student’s decision boundary to the
teacher’s. To quantify this alignment, we define the Hausdorff distance as follows:
\begin{restatable}
[Hausdorff Distance]{definition}{hausdorff}
\label{def:hausdorff} 
Let \( \mathcal{M}_t, \mathcal{M}_s \subseteq \mathbb{R}^{n\times d} \) be two non-empty subsets of a metric space. The \emph{Hausdorff distance} is defined as:
\[
\mathrm{H}(\mathcal{M}_s,\mathcal{M}_t)=
\max\Bigl\{\sup_{\vx\in\mathcal{M}_t}\inf_{\mathbf{u}\in\mathcal{M}_s}\|\vx-\mathbf{u}\|_F,\;
          \sup_{\mathbf{u}\in\mathcal{M}_s}\inf_{\vx\in\mathcal{M}_t}\|\mathbf{u}-\vx\|_F\Bigr\}.
\]
\end{restatable}


We observe that for training sample $\vx_i$ and its CFE $\vx_i'$, the segment joining them cuts the teacher's boundary since they have different predictions. Essentially, there exists an intersection point $\vx_i^\star$ on this segment such that $f_t(\vx_i^\star) = 0.5$. Now, if the student is taught to matches the teacher at  $\vx_i$ and $\vx_i'$, the student would also have another intersection point on this segment. These two intersection points lying on the teacher and student boundaries will act as clamps, pulling the two boundaries close to each other, since their own gap gets smaller as $\vx_i$ and its CFE $\vx_i'$ comes closer.
We assume boundaries are closed and  distance is measured within a compact region (e.g., support of data).

\begin{restatable}[Existence of Boundary Crossing for Counterfactual Pairs]{lemma}{boundarycrossing} \label{lem:boundary}
Let $f_t: \mathbb{R}^{n \times d} \to [0,1]$ be a continuous function. For a datapoint and its counterfactual pair $(\vx_i, \vx_i')$, there exists a point $\vx_i^\star =\alpha \vx_i + (1-\alpha)\vx_i'$ for an $\alpha \in (0,1)$ (on the line joining $\vx_i$ and $\vx_i'$) such that: $
f_t(\vx_i^\star) = 0.5$. 
\end{restatable}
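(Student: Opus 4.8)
The plan is to invoke the Intermediate Value Theorem on the scalar function obtained by restricting $f_t$ to the line segment joining $\vx_i$ and its counterfactual $\vx_i'$. First I would define the parametrization $\gamma:[0,1]\to\mathbb{R}^{n\times d}$ by $\gamma(\alpha)=\alpha\vx_i+(1-\alpha)\vx_i'$, which traces out the segment as $\alpha$ runs from $0$ (landing at $\vx_i'$) to $1$ (landing at $\vx_i$). Since $\gamma$ is an affine, hence continuous, map and $f_t$ is continuous by hypothesis, the composition $\phi(\alpha):=f_t(\gamma(\alpha))$ is a continuous real-valued function on the compact interval $[0,1]$.

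Next I would use the defining property of a counterfactual pair, namely that $\vx_i$ and $\vx_i'$ receive opposite predictions under the teacher. Concretely, one of the two points satisfies $f_t<0.5$ and the other satisfies $f_t\ge 0.5$; by Definition~\ref{def:closestCfx} the original point $\vx_i$ has $f_t(\vx_i)<0.5$ while its CFE $\vx_i'$ has $f_t(\vx_i')\ge 0.5$ (or vice versa). Evaluating the endpoints gives $\phi(1)=f_t(\vx_i)<0.5$ and $\phi(0)=f_t(\vx_i')\ge 0.5$, so the value $0.5$ lies between $\phi(0)$ and $\phi(1)$. The Intermediate Value Theorem then yields some $\alpha\in(0,1)$ with $\phi(\alpha)=0.5$, and setting $\vx_i^\star=\gamma(\alpha)=\alpha\vx_i+(1-\alpha)\vx_i'$ gives the claimed boundary-crossing point with $f_t(\vx_i^\star)=0.5$.

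I would take slight care about the boundary case where one endpoint attains exactly $0.5$: the strict inequality $f_t(\vx_i)<0.5$ from the definition of the closest CFE, together with $f_t(\vx_i')\ge 0.5$, guarantees the crossing value $0.5$ is strictly separated from at least one endpoint, so the IVT produces a root in the open interval $(0,1)$ rather than only at an endpoint. This is what licenses the conclusion $\alpha\in(0,1)$ as stated, placing $\vx_i^\star$ strictly in the interior of the segment.

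Honestly, this is a one-line application of the IVT and I do not anticipate a genuine obstacle; the only thing to watch is that the statement claims $\alpha\in(0,1)$ (open interval), which must be justified rather than just $\alpha\in[0,1]$. The argument is entirely one-dimensional and does not depend on the ambient dimension $n\times d$, since restricting to the segment collapses the problem to the scalar function $\phi$. The lemma then sets up the geometric clamping intuition for Theorem~\ref{thm:alpha+eps}: because the teacher crosses $0.5$ on this segment, and a well-distilled student matches the teacher at both endpoints, the student's boundary is forced to cross nearby as well, controlling the Hausdorff distance between $\mathcal{M}_t$ and $\mathcal{M}_s$.
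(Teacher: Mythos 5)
Your proof is correct and follows essentially the same route as the paper's: restrict $f_t$ to the segment joining $\vx_i$ and $\vx_i'$ via an affine parametrization and apply the Intermediate Value Theorem to the resulting continuous scalar function, using the opposite-side predictions at the endpoints. The only quibble is your edge-case remark --- if $f_t(\vx_i')=0.5$ exactly, strict separation of the value $0.5$ from just one endpoint does \emph{not} force an interior root (the unique crossing could sit at $\alpha=0$); the paper sidesteps this by simply asserting $f_t(\vx_i')>0.5$ strictly, and in the degenerate case $\vx_i'$ itself already lies on the teacher's boundary, which is all the downstream theorem needs.
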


\begin{restatable}[Teacher–Student Boundary Proximity]{theorem}{teachstudentprox}
\label{thm:alpha+eps}
Let $f_t,\;f_s: \mathbb{R}^{n \times d} \;\rightarrow; [0,1]$
be the teacher and student model, with decision boundaries
\(\mathcal{M}_t=\{\vx\mid f_t(\vx)=0.5\}\) and
\(\mathcal{M}_s=\{\vx\mid f_s(\vx)=0.5\}\), respectively. Assume we observe a \emph{CFE-infused dataset}
$ \mathcal{D}_{cf}=\bigl\{(\vx_i,\vx_i')\bigr\}_{i=1}^k
$ satisfying: \textit{(A1)} Minimal perturbation: \(\|\vx_i - \vx_i'\|_F \,\le\, \alpha\) with \(\alpha>0\); \textit{(A2)} Exact distillation:
      \(f_s(\vx_i) = f_t(\vx_i)\) and \(f_s(\vx_i') = f_t(\vx_i')\); and (A3) \(\varepsilon\)-spread along the teacher and student boundary, i.e., for each pair, there exist a teacher's (or student's) crossing point 
\(\vx_i^\star=\alpha \vx_i + (1-\alpha)\vx_i'\) for $\alpha \in (0,1)$ such that \(f_t(x_i^\star)=0.5\) (or, $ f_s(x_i^\star)=0.5$) and for every \(a\in\mathcal{M}_t\) (or $\mathcal{M}_s$), there exists an \(i\) with
      \(\|a - \vx_i^\star\|_2 \le \varepsilon\).  Then the Hausdorff distance between the decision boundaries obeys:
$\;
\mathrm{H}(\mathcal{M}_s,\mathcal{M}_t)
\;\le\;
\alpha \;+\; \varepsilon .
\;$

\end{restatable}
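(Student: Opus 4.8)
The plan is to bound each of the two directed distances in the Hausdorff distance (\cref{def:hausdorff}) separately by $\alpha+\varepsilon$ and then take their maximum. The whole argument hinges on a single \emph{clamping} observation: for every counterfactual pair, the teacher's boundary $\mathcal{M}_t$ and the student's boundary $\mathcal{M}_s$ both cross the \emph{same} short segment $[\vx_i,\vx_i']$, so their crossing points cannot be more than $\alpha$ apart.

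First I would fix a pair $(\vx_i,\vx_i')$ and produce the two crossing points. Since the endpoints carry opposite teacher predictions (by \cref{def:closestCfx}, $f_t(\vx_i)<0.5$ and $f_t(\vx_i')\ge 0.5$), \cref{lem:boundary} yields a teacher crossing point $\vx_i^\star=\lambda_i\vx_i+(1-\lambda_i)\vx_i'\in\mathcal{M}_t$ for some $\lambda_i\in(0,1)$. For the student, assumption (A2) transfers these opposite-sign values: $f_s(\vx_i)=f_t(\vx_i)<0.5$ and $f_s(\vx_i')=f_t(\vx_i')\ge 0.5$. Applying the same intermediate-value reasoning as in \cref{lem:boundary}, now to the continuous function $f_s$, gives a student crossing point $\tilde{\vx}_i^\star\in\mathcal{M}_s$ lying on the same segment. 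Here I would be careful to distinguish the convex-combination parameter $\lambda_i$ from the perturbation bound $\alpha$ of (A1), since the statement overloads the symbol $\alpha$; the teacher's and student's crossing points generally correspond to \emph{different} values of this parameter.

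The clamping step is then immediate: both $\vx_i^\star$ and $\tilde{\vx}_i^\star$ lie on the segment joining $\vx_i$ and $\vx_i'$, whose length is at most $\alpha$ by (A1), hence $\|\vx_i^\star-\tilde{\vx}_i^\star\|_F\le\|\vx_i-\vx_i'\|_F\le\alpha$. To bound the directed distance from $\mathcal{M}_t$ to $\mathcal{M}_s$, take any $a\in\mathcal{M}_t$; by the $\varepsilon$-spread in (A3) there is an index $i$ with $\|a-\vx_i^\star\|_2\le\varepsilon$, and the triangle inequality gives $\inf_{\mathbf{u}\in\mathcal{M}_s}\|a-\mathbf{u}\|_F\le\|a-\tilde{\vx}_i^\star\|_F\le\|a-\vx_i^\star\|_F+\|\vx_i^\star-\tilde{\vx}_i^\star\|_F\le\varepsilon+\alpha$. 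Taking the supremum over $a\in\mathcal{M}_t$ bounds this directed distance by $\alpha+\varepsilon$. The symmetric direction $\mathcal{M}_s\to\mathcal{M}_t$ follows identically, starting from $b\in\mathcal{M}_s$, using the $\varepsilon$-spread of the student crossing points $\tilde{\vx}_i^\star$ together with the same per-pair clamping bound to the teacher crossing point $\vx_i^\star\in\mathcal{M}_t$. Taking the maximum of the two bounds yields $\mathrm{H}(\mathcal{M}_s,\mathcal{M}_t)\le\alpha+\varepsilon$.

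The step I expect to require the most care is the construction of the student crossing point: \cref{lem:boundary} is stated only for $f_t$, so I must re-invoke its intermediate-value argument for $f_s$, and this is precisely where (A2) is essential, since without matching the teacher at \emph{both} endpoints I cannot guarantee that $f_s$ changes sign along the segment. I would also be explicit that the $\|\cdot\|_2$ of (A3) and the $\|\cdot\|_F$ of the Hausdorff distance agree under the identification $\mathbb{R}^{n\times d}\cong\mathbb{R}^{nd}$, so the triangle inequalities combine cleanly. Everything beyond these two points is routine application of the triangle inequality and suprema.
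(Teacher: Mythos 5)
Your proposal is correct and follows essentially the same argument as the paper's proof: bound each directed Hausdorff term by locating the teacher and student crossing points on the same length-$\le\alpha$ segment (via \cref{lem:boundary} and (A2)), clamp them to within $\alpha$ of each other, and combine with the $\varepsilon$-spread via the triangle inequality. Your explicit cautions --- re-running the intermediate-value argument for $f_s$ rather than citing \cref{lem:boundary} directly, and disambiguating the overloaded symbol $\alpha$ --- are refinements of points the paper's proof treats implicitly, not a different route.
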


Consequently, tight (small \(\alpha\)) and well-spread (small \(\varepsilon\))
CFE pairs guarantee that the student boundary remains inside an
\((\alpha+\varepsilon)\)-tube around the teacher boundary.

\textbf{Interpretation of the assumptions and bound}.
Our theorem makes three intuitive assumptions. \textit{(A1)} Minimal perturbation requires each input and its CFE pair $(\vx,\vx')$ to differ by at most $\alpha$. CFEs are by definition  the minimal changes that flips the teacher’s prediction, so $\alpha$ is typically much smaller than the distance between arbitrary training points (note that we do no need CFEs to sit exactly on the teacher’s boundary, i.e., $f_t\!=\!0.5$).  It suffices that the perturbation is small and flips the label—capturing the practical way CFEs are produced. \textit{(A2)} Exact distillation agreement assumes the student matches the teacher’s outputs on the input and CFE pairs. This is reasonable, as these examples are directly used in training, and their logits are aligned through the distillation (KL) loss.
\textit{(A3)} $\varepsilon$-spread assumes the inputs are reasonably well spread. No region of the teacher's or student's boundary is more than $\varepsilon$ away from a crossing point (generally smooth). Under these assumptions, the Hausdorff gap between student and teacher boundaries is tightly bounded by $\alpha+\varepsilon$.  This ensures the student’s decision boundary stays within an $(\alpha+\varepsilon)$-tube around the teacher's, illustrating the geometric faithfulness we want in few-shot knowledge distillation. See proofs in Appendix~\ref{apx:thm2_proof}.

\begin{wrapfigure}[21]{r}{0.45\textwidth}
\vspace{-18pt}
\begin{minipage}{0.45\textwidth}
\begin{algorithm}[H]
\caption{\ours: CFE-infused Distillation}
\label{alg:tactik}
\begin{algorithmic}[1]
\REQUIRE \small
    Teacher $g_t$, student $g_s$, dataset $\mathcal{D}_k{=}\{(\vx_i,y_i)\}_{i=1}^{k}$,  $\mathrm{CFGen}$,   learning rate $\eta$,  
    loss weights $\alpha$ (KD), $\beta$ (LWD), Epochs $E$
\STATE $\mathcal{D}_{\text{cf}} \leftarrow \emptyset$
\FORALL{$(\vx,y)\in\mathcal{D}_k$}
    \STATE $x' \leftarrow \mathrm{CFGen}(\vx,g_t)$
    \STATE $\mathcal{D}_{\text{cf}} \leftarrow \mathcal{D}_{\text{cf}} \cup \{(\vx',1-y)\}$
\ENDFOR
\STATE $\mathcal{D}_{\text{train}} \leftarrow \mathcal{D}_k \cup \mathcal{D}_{\text{cf}}$

\FOR{$e=1$ {\bf to} $E$}
   \FORALL{$(\vx,y)\in\mathcal{D}_{\text{train}}$}
        \STATE $\mathcal{L}_{\text{hard}} \leftarrow \text{CE}(g_s(\vx),y)$
        \STATE $\mathcal{L}_{\text{KD}} \leftarrow \text{KL}(g_t(\vx)\,\|\,g_s(\vx))$
        \STATE $\mathcal{L}_{\text{LWD}} \leftarrow \sum_{l\in\mathcal{I}}\|h_t^{(l)} -  h_s^{(l)}\|_2^2$
        \STATE $\mathcal{L} \leftarrow \mathcal{L}_{\text{hard}} + \alpha\,\mathcal{L}_{\text{KD}} + \beta\,\mathcal{L}_{\text{LWD}}$
        \STATE Update $\theta_s \leftarrow \theta_s - \eta\,\nabla_{\theta_s}\mathcal{L}$
    \ENDFOR
\ENDFOR
\RETURN distilled student $g_s$
\end{algorithmic}
\end{algorithm}
\end{minipage}
\vspace{-200cm}
\end{wrapfigure}

\textbf{Proposed Algorithm (\ours).}
We propose \ours, a \textbf{Co}unterfactual Explanation-infused \textbf{D}istillation strategy for few-shot, task-aware distillation of LLMs.  The first step is CFE generation. Existing methods primarily fall into optimization-based~\cite{wachter2017counterfactual}, search-based~\cite{gilo2023generalsear}, and generative approaches~\cite{hellemans2025flexibleco}. These methods can be computationally expensive for LLMs, and frequently yield out-of-distribution or semantically implausible examples. To address this, we adopt a hybrid approach that combines the teacher model predictions with an LLM as an oracle for CFE generation. Specifically, given an input and its original label, we prompt an LLM (e.g., \texttt{GPT-4o}~\cite{openai2024gpt4ocard}) to generate a semantically similar sentence intended to flip the label with minimal changes to the input. We then check whether this generated example indeed flips the teacher model’s prediction, ensuring its utility as a true CFE. Once validated, each CFE is paired with its original input $(\vx,\vx')$ and added to the training set. During distillation, we ensure that each input–CFE pair is included in the same mini-batch, enabling the student to jointly learn from both examples. The student is then trained using a combination of task loss, KL-based distillation loss, and optional layer-wise alignment. An overview of this process is described in Algorithm~\ref{alg:tactik}, with full implementation details and prompts provided in Appendix~\ref{apx:exp}.

\section{Experiments}
\label{sec:exp}

The goal of our experiments is to evaluate the effectiveness of integrating CFEs for knowledge distillation under few-shot learning settings. We investigate whether using limited real samples infused with their corresponding CFEs enable better distillation compared to only using real samples.

\textbf{Datasets.} We evaluate \ours across six text classification benchmarks that span a range of domains. \sst is a binary sentiment classification task derived from movie review snippets~\cite{socher2013sst2}. \sentiment consists of tweets labeled as positive or negative, reflecting user sentiment in short social media posts~\cite{go2009sentiment140}. \imdb is a binary sentiment classification dataset containing full-length movie reviews~\cite{maas2011imdb}. \cola (Corpus of Linguistic Acceptability) is a grammaticality judgment task that requires the model to identify whether a sentence is linguistically acceptable~\cite{warstadt2018cola}. \amazon contains customer reviews labeled as positive or negative sentiment~\cite{McAuley2013amazon}. \yelp is another sentiment classification dataset based on user-generated restaurant reviews~\cite{yelp_open_dataset}. 

\textbf{Model.}
We experiment with two prominent model families: \deberta~\cite{he2021debertav3} and \qwen~\cite{qwen2025qwen25technicalreport}. For \deberta, we use the ``\texttt{base}'' model (100M parameters) as the teacher and distill into two smaller ``\texttt{small}'' (44M)  and ``\texttt{xsmall}'' (22M) variants as students. For \qwen, we use \qwent as the teacher and distill into the smaller \qwens. Full training details are in Appendix~\ref{apx:exp}.

\textbf{Baselines.} We compare our method against three task-aware knowledge distillation baselines: (i) Standard knowledge distillation (KD) where the student learns from the teacher's soft predictions using KL divergence~\cite{sanh2020distilbertdistill}; (ii) Layer-wise distillation (LWD), which extends KD by additionally aligning the student’s intermediate hidden representations with those of the teacher using mean squared error~\cite{jiao2019tinybert}; and (iii) TED (Task-aware Layer-wise Distillation) which incorporates task-specific neural filters at each layer to selectively transfer task-relevant information from teacher to student~\cite{taskAware}. All methods are evaluated under $k$-shot training settings, and student models are trained on identical few-shot splits to ensure a fair comparison (see details in Appendix~\ref{apx:exp}).

\textbf{Setup.} As in prior works on task-aware distillation~\cite{taskAware}, we first train a teacher model on the full training dataset to serve as a strong source of supervision. A student model is then initialized and distilled using only $k$ datapoints, where $k \in \{8, 16, 32, 64, 128, 512\}$. We apply our strategy \ours to three standard distillation baselines: KD, LWD, and TED. For a fair comparison, \ours uses $k/2$ original samples and their $k/2$ corresponding CFE (a total of $k$ shots) while the baseline methods are trained on $k$ original samples. Performance is evaluated using accuracy on the test set for each dataset. All experimental results are averaged over five runs, with the mean and standard deviation reported. Results for the \debertab teacher and \debertas student are shown in~\cref{tab:cf_results_debert-small}, while results for the smaller \debertaxs student are in Appendix~\ref{apx:exp}. For experiments using the \qwent teacher and the \qwens student, see~\cref{tab:qwen_cola_yelp}. We report the accuracy of teacher models trained on the full datasets in Table~\ref{tab:teacher_accuracy} in Appendix~\ref{apx:exp}.

\renewcommand{\arraystretch}{1.1}
\begin{table}[t]
\centering
\scriptsize
\caption{\small \textbf{Classification accuracy ($\pm$ std) across datasets with varying total training sizes $k$.} For \ours, training data consists of $k/2$ standard and $k/2$ CFEs.  Teacher model \debertab and student model \debertas. }
\vspace{0.5em}
\resizebox{\linewidth}{!}{%
\begin{tabular}{clcccccc}
\toprule
\multicolumn{2}{c}{} & \multicolumn{6}{c}{\textbf{Total Samples ($k$)}} \\
\cmidrule(rl){3-8}
\textbf{Dataset} & \textbf{Method} & \textbf{8} & \textbf{16} & \textbf{32} & \textbf{64} & \textbf{128} & \textbf{512}\\
\midrule
\multirow{4}{*}{\centering 
\shortstack{\texttt{Amazon}\\ \texttt{Polarity}}
}
  & KD              & 0.671\ts{0.046} & 0.712\ts{0.033} & 0.758\ts{0.032} & 0.789\ts{0.022} & 0.823\ts{0.016} & 0.846\ts{0.007} \\
  & +\ours        & \textbf{0.758}\ts{0.027} & \textbf{0.795}\ts{0.033} & \textbf{0.819}\ts{0.035} & \textbf{0.812}\ts{0.004} & \textbf{0.837}\ts{0.014} & \textbf{0.860}\ts{0.015} \\
\cmidrule{2-8}
  & LWD             & 0.676\ts{0.090} & 0.738\ts{0.033} & 0.777\ts{0.009} & 0.809\ts{0.015} & \textbf{0.827}\ts{0.025} & \textbf{0.842}\ts{0.019} \\
  & +\ours       & \textbf{0.724}\ts{0.052} & \textbf{0.779}\ts{0.056} & \textbf{0.811}\ts{0.015} & \textbf{0.828}\ts{0.015} & 0.816\ts{0.020} & 0.841\ts{0.013} \\
\midrule
\multirow{4}{*}{\centering \cola}
  & KD              & 0.693\ts{0.062} & 0.707\ts{0.029} & 0.721\ts{0.012} & 0.747\ts{0.005} & 0.758\ts{0.009} & 0.771\ts{0.003} \\
  & +\ours        & \textbf{0.739}\ts{0.026} & \textbf{0.755}\ts{0.017} & \textbf{0.769}\ts{0.011} & \textbf{0.769}\ts{0.016} & \textbf{0.772}\ts{0.006} & \textbf{0.791}\ts{0.004} \\
\cmidrule{2-8}
  & LWD             & 0.713\ts{0.031} & 0.698\ts{0.037} & 0.731\ts{0.021} & 0.744\ts{0.007} & 0.750\ts{0.018} & 0.761\ts{0.011} \\
  & + \ours       & \textbf{0.730}\ts{0.035} & \textbf{0.744}\ts{0.031} & \textbf{0.762}\ts{0.011} & \textbf{0.752}\ts{0.009} & \textbf{0.756}\ts{0.010} & \textbf{0.784}\ts{0.003} \\
\midrule
\multirow{4}{*}{\centering \imdb}
  & KD              & 0.714\ts{0.047} & 0.817\ts{0.028} & 0.875\ts{0.027} & 0.896\ts{0.008} & \textbf{0.912}\ts{0.009} & \textbf{0.917}\ts{0.006} \\
  & + \ours        & \textbf{0.835}\ts{0.078} & \textbf{0.888}\ts{0.005} & \textbf{0.890}\ts{0.011} & \textbf{0.899}\ts{0.007} & 0.907\ts{0.006} & 0.913\ts{0.005} \\
\cmidrule{2-8}
  & LWD             & 0.760\ts{0.046} & 0.836\ts{0.045} & 0.875\ts{0.024} & 0.889\ts{0.013} & 0.905\ts{0.008} & \textbf{0.914}\ts{0.006} \\
  & + \ours       & \textbf{0.861}\ts{0.017} & \textbf{0.886}\ts{0.011} & \textbf{0.893}\ts{0.006} & \textbf{0.898}\ts{0.005} & 0.905\ts{0.010} & 0.913\ts{0.010} \\
\midrule
\multirow{4}{*}{\centering \sst}
  & KD              & 0.617\ts{0.042} & 0.712\ts{0.052} & 0.757\ts{0.063} & 0.820\ts{0.019} & 0.848\ts{0.013} & \textbf{0.899}\ts{0.007} \\
  & + \ours        & \textbf{0.719}\ts{0.063} & \textbf{0.781}\ts{0.034} & \textbf{0.821}\ts{0.013} & \textbf{0.827}\ts{0.008} & \textbf{0.853}\ts{0.015} & 0.892\ts{0.018} \\
\cmidrule{2-8}
  & LWD             & 0.627\ts{0.053} & 0.721\ts{0.055} & 0.776\ts{0.031} & 0.817\ts{0.005} & 0.829\ts{0.013} & \textbf{0.892}\ts{0.012} \\
  & + \ours       & \textbf{0.694}\ts{0.079} & \textbf{0.785}\ts{0.028} & \textbf{0.832}\ts{0.011} & \textbf{0.830}\ts{0.007} & \textbf{0.835}\ts{0.012} & 0.880\ts{0.020} \\
\midrule
\multirow{4}{*}{\centering \yelp}
  & KD              & 0.714\ts{0.058} & 0.817\ts{0.031} & 0.855\ts{0.021} & \textbf{0.878}\ts{0.006} & 0.885\ts{0.018} & \textbf{0.916}\ts{0.007} \\
  & + \ours        & \textbf{0.740}\ts{0.094} & \textbf{0.832}\ts{0.045} & \textbf{0.860}\ts{0.018} & 0.874\ts{0.006} & \textbf{0.888}\ts{0.013} & 0.913\ts{0.011} \\
\cmidrule{2-8}
  & LWD             & 0.733\ts{0.070} & 0.832\ts{0.026} & 0.857\ts{0.011} & 0.868\ts{0.006} & 0.881\ts{0.017} & \textbf{0.920}\ts{0.010} \\
  & + \ours       & \textbf{0.738}\ts{0.093} & \textbf{0.865}\ts{0.010} & \textbf{0.870}\ts{0.017} & \textbf{0.871}\ts{0.019} & \textbf{0.885}\ts{0.007} & 0.913\ts{0.013} \\
  \midrule
\multirow{4}{*}{\centering \texttt{Sent140}}
  & KD              & 0.580\ts{0.039} & 0.597\ts{0.042} & 0.645\ts{0.023} & 0.690\ts{0.035} & 0.752\ts{0.011} & \textbf{0.802}\ts{0.006} \\
  & + \ours     & \textbf{0.629}\ts{0.036} & \textbf{0.640}\ts{0.048} & \textbf{0.731}\ts{0.022} & \textbf{0.754}\ts{0.017} & \textbf{0.778}\ts{0.007} & 0.784\ts{0.019} \\
\cmidrule{2-8}
  & LWD             & 0.581\ts{0.041} & 0.593\ts{0.039} & 0.665\ts{0.027} & 0.708\ts{0.029} & \textbf{0.751}\ts{0.009} & \textbf{0.785}\ts{0.019} \\
  & + \ours     & \textbf{0.628}\ts{0.034} & \textbf{0.652}\ts{0.038} & \textbf{0.706}\ts{0.016} & \textbf{0.741}\ts{0.014} & 0.729\ts{0.063} & 0.760\ts{0.023} \\
\bottomrule
\end{tabular}%
}
\label{tab:cf_results_debert-small}
\end{table}
\textbf{Results and Analysis.} Across all datasets, we observe that \ours significantly improves performance in the low-data regime, particularly when $k \leq 64$. For example, on \amazon with only 8 labeled examples, KD + \ours achieves 75.8\% accuracy compared to 67.1\% for standard KD (8.7 points improvement). Similarly, for \imdb at $k=8$, LWD + \ours improves over standard LWD by more than 10 points (86.1\% vs. 76.0\%).  As the number of labeled examples increases, the benefits of CFE augmentation diminish. At $k=512$, the performance of standard and \ours becomes nearly identical in many cases.  However, even in these larger settings, it is important to note that our method achieves comparable results while using only $k/2$ real samples and $k/2$ CFE, effectively halving the amount of labeled data required to reach similar performance.  The effectiveness of CFEs varies by dataset. On \cola, we observe consistent improvements across all $k$ values for both KD and LWD, indicating that CFEs are well-aligned with the task’s grammaticality decision boundary. In contrast, datasets like \sentiment show strong early gains. For datasets such as \imdb and \sst, CFE provides substantial improvements at low $k$, but underperforms slightly at $k=512$, possibly due to redundancy. Among distillation methods, LWD generally performs on par with or slightly better than KD across most settings, with \ours offering similar relative improvements for both.

We also compare with TED which has been found to work well with larger distillation datasets~\cite{taskAware}. We note that TED introduces additional complexity by requiring the training of task-specific filters prior to distillation. Interestingly, we find that TED does not consistently outperform classical methods like KD or LWD in the \emph{few-shot} settings (see~\cref{tab:ted_results_debert-small}). Nonetheless, TED + \ours yields consistent gains over standard TED, demonstrating that our approach is  broadly applicable. Our findings suggest that \emph{simpler distillation approaches like KD or LWD are preferable when data is scarce}: they are easier to implement and, when combined with \ours, deliver much stronger performance gains without the overhead of filter training.

\textbf{Ablations.} (1) \textit{On template designing and prompt choices}: We vary prompt templates for generating CFEs and observed that \ours is robust to prompt choices, showing low standard deviation across variants  and consistently outperforming the KD baseline few shot settings (see Table~\ref{tab:cod_results_prompt}). One possible direction could be to use  automatic prompt generation methods~\cite{levi2024intent}, however these are typically more compute-intensive. (2) \textit{Computational and Memory Requirements}:
We assess the computational efficiency of \ours under varying few-shot budgets $k$ using the \texttt{codecarbon} package~\cite{codecarbon} to track runtime and energy consumption (see Table~\ref{tab:compute-ablation}). (3) \textit{Effect of soft-label supervision}: To study the role of soft-labels in our approach, we remove or corrupting the teacher’s soft labels (see Table~\ref{tab:softlabel-ablation}). Removing the soft label term ($\alpha = 0$) leads to a substantial drop in performance across all shot levels. Although \ours still improves over KD in this setting, the gains are significantly reduced. This highlights that the soft label calibration from the teacher is a key contributor to the effectiveness of counterfactual explanation data. Additionally, when replacing soft labels with random values, performance degrades sharply, likely due to inconsistency with the hard labels, introducing conflicting supervision signals in the training objective. 

\renewcommand{\arraystretch}{1.1}
\begin{table}[t]
\centering
\scriptsize
\caption{\small \textbf{Classification accuracy ($\pm$ std)  with TED and TED + \ours across datasets and varying total training sizes $k$.} For \ours, training data consists of $k/2$ standard and $k/2$ CFEs. Teacher model is \debertab and student model is \debertas.}
\vspace{0.5em}
\resizebox{\linewidth}{!}{%
\begin{tabular}{clcccccc}
\toprule
\multicolumn{2}{c}{} & \multicolumn{6}{c}{\textbf{Total Samples ($k$)}} \\
\cmidrule(rl){3-8}
\textbf{Dataset} & \textbf{Method} & \textbf{8} & \textbf{16} & \textbf{32} & \textbf{64} & \textbf{128} & \textbf{512}\\
\midrule
\multirow{2}{*}{\centering \shortstack{\texttt{Amazon}\\\texttt{Polarity}}}
  & TED       & 0.646\ts{0.075} & 0.697\ts{0.033} & 0.758\ts{0.012} & 0.816\ts{0.023} & 0.814\ts{0.020} & 0.846\ts{0.025} \\
  &  + \ours   & \textbf{0.731}\ts{0.054} & \textbf{0.754}\ts{0.056} & \textbf{0.802}\ts{0.007} & \textbf{0.818}\ts{0.013} & \textbf{0.805}\ts{0.008} & \textbf{0.848}\ts{0.010} \\
\midrule
\multirow{2}{*}{\centering \cola}
  & TED       & \textbf{0.750}\ts{0.022} & 0.737\ts{0.028} & 0.731\ts{0.020} & 0.746\ts{0.011} & 0.760\ts{0.011} & 0.772\ts{0.010} \\
  & + \ours   & 0.748\ts{0.028} & \textbf{0.757}\ts{0.023} & \textbf{0.767}\ts{0.021} & \textbf{0.768}\ts{0.016} & \textbf{0.780}\ts{0.007} & \textbf{0.791}\ts{0.006} \\
\midrule
\multirow{2}{*}{\centering \imdb}
  & TED       & 0.695\ts{0.018} & 0.800\ts{0.042} & 0.854\ts{0.023} & 0.876\ts{0.012} & \textbf{0.908}\ts{0.009} & \textbf{0.917}\ts{0.006} \\
  & + \ours   & \textbf{0.827}\ts{0.056} & \textbf{0.879}\ts{0.003} & \textbf{0.884}\ts{0.007} & \textbf{0.887}\ts{0.010} & 0.895\ts{0.010} & 0.916\ts{0.005} \\
\midrule
\multirow{2}{*}{\centering \sst}
  & TED       & 0.597\ts{0.052} & 0.701\ts{0.055} & 0.732\ts{0.026} & 0.812\ts{0.026} & 0.829\ts{0.002} & \textbf{0.904}\ts{0.006} \\
  &  + \ours   & \textbf{0.658}\ts{0.087} & \textbf{0.779}\ts{0.012} & \textbf{0.813}\ts{0.017} & \textbf{0.833}\ts{0.014} & \textbf{0.836}\ts{0.030} & 0.879\ts{0.011} \\
\midrule
\multirow{2}{*}{\centering \yelp}
  & TED       & 0.699\ts{0.048} & 0.815\ts{0.014} & 0.846\ts{0.020} & 0.869\ts{0.012} & \textbf{0.894}\ts{0.009} & \textbf{0.914}\ts{0.012} \\
  & + \ours   & \textbf{0.742}\ts{0.095} & \textbf{0.837}\ts{0.016} & \textbf{0.868}\ts{0.018} & \textbf{0.878}\ts{0.019} & 0.886\ts{0.013} & 0.913\ts{0.008} \\
\bottomrule
\end{tabular}%
}
\label{tab:ted_results_debert-small}
\end{table}

\renewcommand{\arraystretch}{1.1}
\begin{table}[t]
\centering
\scriptsize
\caption{\small \textbf{Classification accuracy ($\pm$ std) of \qwen on \cola and \yelp datasets with varying training sizes $k$.} For \ours training data consists of $k/2$ standard and $k/2$ CFEs. Teacher model is \qwent and student model is \qwens. Refer to Appendix~\ref{apx:exp} for other datasets.}
\vspace{0.5em}
\resizebox{\linewidth}{!}{%
\begin{tabular}{clcccccc}
\toprule
\multicolumn{2}{c}{} & \multicolumn{6}{c}{\textbf{Total Samples ($k$)}} \\
\cmidrule(rl){3-8}
\textbf{Dataset} & \textbf{Method} & \textbf{8} & \textbf{16} & \textbf{32} & \textbf{64} & \textbf{128} & \textbf{512} \\
\midrule
\multirow{4}{*}{\centering \cola}
  & KD        & 0.681\ts{0.012} & 0.676\ts{0.023} & 0.668\ts{0.042} & 0.654\ts{0.032} & 0.676\ts{0.020} & 0.732\ts{0.014} \\
  & + \ours   & \textbf{0.683}\ts{0.016} & \textbf{0.686}\ts{0.018} & \textbf{0.697}\ts{0.015} & \textbf{0.711}\ts{0.020} & \textbf{0.736}\ts{0.017} & \textbf{0.757}\ts{0.011} \\
\cmidrule{2-8}
  & LWD       & 0.681\ts{0.012} & 0.657\ts{0.031} & 0.678\ts{0.018} & 0.650\ts{0.039} & 0.636\ts{0.029} & 0.712\ts{0.014} \\
  & + \ours  & \textbf{0.682}\ts{0.018} & \textbf{0.687}\ts{0.013} & \textbf{0.704}\ts{0.010} & \textbf{0.714}\ts{0.020} & \textbf{0.719}\ts{0.022} & \textbf{0.755}\ts{0.013} \\
\midrule
\multirow{4}{*}{\centering \yelp}
  & KD        & 0.684\ts{0.021} & 0.759\ts{0.040} & 0.827\ts{0.030} & 0.861\ts{0.017} & \textbf{0.887}\ts{0.012} & \textbf{0.920}\ts{0.010} \\
  & + \ours   & \textbf{0.745}\ts{0.029} & \textbf{0.779}\ts{0.048} & \textbf{0.828}\ts{0.072} & \textbf{0.886}\ts{0.007} & 0.883\ts{0.010} & 0.916\ts{0.008} \\
\cmidrule{2-8}
  & LWD       & 0.685\ts{0.019} & 0.777\ts{0.036} & 0.837\ts{0.027} & 0.876\ts{0.020} & \textbf{0.898}\ts{0.008} & \textbf{0.920}\ts{0.005} \\
  & + \ours  & \textbf{0.746}\ts{0.028} & \textbf{0.778}\ts{0.035} & \textbf{0.847}\ts{0.020} & \textbf{0.876}\ts{0.014} & 0.883\ts{0.010} & 0.909\ts{0.009} \\
\bottomrule
\end{tabular}%
}
\label{tab:qwen_cola_yelp}
\end{table}

\textbf{Discussion.} In this paper, we introduced a novel approach for task-aware knowledge distillation in few-shot settings that leverages CFEs to enhance the data efficiency of knowledge distillation. Our results show that \ours consistently outperforms existing distillation approaches in low-data regimes. Importantly, we demonstrate that \ours can achieve improved performance over baselines while effectively using only half the number of original data, with the remainder consisting of generated CFEs. This finding has significant implications for reducing the cost of data collection in real-world scenarios where sourcing high-quality data is expensive or time-consuming~\cite{roh2019surveydatacollectionmachine}.
Our approach offers an explanation-driven perspective on distillation. By including CFE’s, we implicitly highlight the key features most important to flipping a teacher’s decision. This may help the student model reduce its reliance on spurious correlations, especially in few-shot settings. In effect, CFE’s guide the student to attend to ``why'' a label changes, not just ``what'' the label is. This bridges explainability and compression, turning explanations into actionable data for knowledge distillation.

\textbf{On the extension to generative LLMs.} 
As research increasingly focuses on data-efficient LLM training~\cite{sachdeva2024traindatae, aljarrah2015efficient}, future work could extend our approach to generative sequence-to-sequence models, enabling efficient distillation beyond classification. In this setting, a CFE can be defined as a minimal change to the input (prompt) that flips a chosen property of the generated text. Formally, given a generative model $f$, a prompt $\vx$, and a binary attribute function $g(\text{output})$ (e.g., sentiment, toxicity, factuality, topic relevance), a counterfactual explanation prompt $\vx^\star$ would be a small semantic perturbation of $x$ such that the generated output $f(\vx^\star)$ flips the value of $g(f(\vx))$. More broadly, CFEs can be reframed through \emph{model sensitivity}, identifying small input perturbations that cause large changes in the output distribution or likelihood. This corresponds to large changes in the sequence-level probability: $p_\theta(y \mid \vx) = \prod_{t=1}^{T} p_\theta(y_t \mid \vx, y_{<t})$,  where $y = (y_1, y_2, \ldots y_T)$ is the generated sequence for an input prompt $\vx$. This could reveal regions of the input space where the model is uncertain, making them especially informative for distillation, supervised fine-tuning, or post-training. Our approach offers a path toward data-efficient LLM distillation from minimal data while reducing cost and maintaining performance. 

\textbf{Limitations.} Generating CFEs introduces additional computational overhead compared to standard distillation approaches. Moreover, our current CFE generation strategy, which relies on prompting LLMs, does not guarantee that we would get the closest counterfactual (as defined in \cref{def:closestCfx}), potentially limiting the precision of our distilled knowledge. Future work could explore alternate methods for generating closer and semantically valid CFEs. Additionally, as with knowledge distillation in general, \ours is inherently dependent on the quality of the teacher model. Any inaccuracies or biases present in the teacher’s decision boundary may be inherited by the student. Addressing robustness to flawed teachers remains an important direction for future research. 

\textbf{Societal Impact.} \ours offers several potential societal impacts, particularly in reducing the cost and effort associated with data collection~\cite{roh2019surveydatacollectionmachine}. By enabling the distillation of high-performance models with fewer data samples, this approach can significantly lower data collection costs, making machine learning more accessible in low-resource environments. This is especially valuable in industries where data is often scarce and expensive to obtain~\cite{bonakdarpour2019asymptotic, roh2019surveydatacollectionmachine}. Moreover, by requiring fewer samples and targeting smaller student models, \ours contributes to more efficient model training and scalable deployment.  Our method leverages explanations as a tool for more effective model compression. In doing so, it bridges the gap between explainability and model compression.

\bibliographystyle{unsrtnat}
\bibliography{kdcfx}

\newpage
\clearpage
\appendix
\section*{Appendix}


\section{Background on Fisher Information and Proof of~\cref{thm:statistical}}
\label{apx:thm1_proof}
This section provides background on Fisher information and a formal proof for Theorem~\ref{thm:statistical}, which quantifies the reduction in estimation error from using CFE-infused training data.

\subsection{Background on Fisher Information Matrix}

\begin{definition}[Positive Semi-definite Matrices]
A matrix \( A \in \mathbb{R}^{d \times d} \) is said to be \textit{positive semi-definite} if it is symmetric and for all non-zero vectors \( \mathbf{x} \in \mathbb{R}^d \), the following condition holds:
\[
\mathbf{x}^T A \mathbf{x} \geq 0 \quad \text{for all} \quad \mathbf{x} \in \mathbb{R}^d.
\]
The eigenvalues of a positive semi-definite are non-negative, i.e., $
\lambda_i(A) \geq 0$ for all eigenvalues $\lambda_i$ of $A$.

\end{definition}
\begin{definition}[Löwner Order]\label{def:lowner}
Let \( A, B \in \mathbb{R}^{d \times d} \) be symmetric matrices. We say that \( A \) is greater than or equal to \( B \) in the Löwner order, denoted \( A \succeq B \), if and only if the matrix \( A - B \) is positive semi-definite. That is,
\[
A \succeq B \quad \text{if and only if} \quad x^T (A - B) x \geq 0 \quad \text{for all} \quad x \in \mathbb{R}^d.
\]
If \( A \succ B \), then \( A - B \) is positive definite, meaning \( A \) is strictly greater than \( B \) in the Löwner order.
\end{definition}

\begin{lemma}[Trace Inequality for Positive Semi-definite Matrices]\label{lem:traceInq}
For positive semi-definite matrices $A, B \in \mathbb{R}^{d \times d}$ where $A \succ B$, then:
\[
\text{Tr}(A^{-1}) < \text{Tr}(B^{-1})
\]
\end{lemma}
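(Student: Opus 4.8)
The plan is to reduce the claimed trace inequality to the operator-antitonicity of matrix inversion: I will show that if $A \succ B$ on positive definite matrices, then $B^{-1} \succ A^{-1}$, after which the trace inequality is immediate. First I would record that $A \succ B$ means $A - B$ is positive definite, and since $B \succeq 0$ we have $A = B + (A-B) \succ 0$, so $A$ is positive definite. To form $B^{-1}$ at all, one needs $B$ invertible, so I would state explicitly that $B$ is assumed positive definite (not merely semidefinite); this is implicit in the hypothesis that $\text{Tr}(B^{-1})$ makes sense.

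The central step is to establish $B^{-1} \succ A^{-1}$, which I would do by a congruence transformation. Because $B \succ 0$ it has a positive definite square root $B^{1/2}$ with inverse $B^{-1/2}$, and since congruence preserves definiteness, $A \succ B$ is equivalent to $B^{-1/2} A\, B^{-1/2} \succ I$. Writing $M := B^{-1/2} A\, B^{-1/2}$, every eigenvalue of this symmetric matrix is therefore strictly greater than $1$. Inverting $M$ inverts its eigenvalues while fixing its eigenvectors, so every eigenvalue of $M^{-1} = B^{1/2} A^{-1} B^{1/2}$ is strictly less than $1$, i.e. $B^{1/2} A^{-1} B^{1/2} \prec I$. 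Conjugating this back by $B^{-1/2}$ on both sides yields $A^{-1} \prec B^{-1}$, equivalently $B^{-1} - A^{-1} \succ 0$.

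Finally, I would invoke that a positive definite matrix has strictly positive trace, since its trace equals the sum of its (strictly positive) eigenvalues. Applying this to $B^{-1} - A^{-1} \succ 0$ and using linearity of the trace gives $\text{Tr}(B^{-1}) - \text{Tr}(A^{-1}) = \text{Tr}\bigl(B^{-1} - A^{-1}\bigr) > 0$, which is exactly $\text{Tr}(A^{-1}) < \text{Tr}(B^{-1})$.

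The only delicate point is the operator-antitonicity argument in the second paragraph; it rests on the equivalence $A \succ B \iff B^{-1/2} A\, B^{-1/2} \succ I$ and on the eigenvalue behavior under inversion, both standard facts about symmetric positive definite matrices. Everything else — the existence of $B^{1/2}$ and the strict positivity of the trace of a positive definite matrix — is routine, so I expect no substantial obstacle beyond being careful to assume $B$ is strictly positive definite rather than only semidefinite.
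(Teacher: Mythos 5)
Your proof is correct and follows essentially the same route as the paper's: both reduce the claim to the Löwner antitonicity of inversion ($A \succ B \Rightarrow B^{-1} \succ A^{-1}$) and then compare traces via eigenvalues; the only difference is that you actually prove the inversion step by congruence with $B^{1/2}$, whereas the paper simply cites it as a known property. Your observation that $B$ must be strictly positive definite (not merely semidefinite) for $\mathrm{Tr}(B^{-1})$ to be defined is a valid tightening of the hypothesis that the paper's statement glosses over.
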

\begin{proof}
Since $A \succ B$, we have $B^{-1} \succ A^{-1}$ by the Löwner order inversion property. The trace operator preserves this inequality because for any $X \succ Y \succ 0$:
\[
\text{Tr}(X) = \sum_{i=1}^d \lambda_i(X) > \sum_{i=1}^d \lambda_i(Y) = \text{Tr}(Y)
\]
where $\lambda_i(\cdot)$ denotes eigenvalues in descending order.
\end{proof}

\fisherdef*
Fisher information captures the amount of information that an observable random variable $x$ carries about an unknown parameter $\theta$ of a distribution that models $x$. We use the notation \( \mathcal{I}(\theta; y, \mathbf{x}) \) to denote the Fisher information about $\theta$ carried by single observation \(y, \mathbf{x} \).

\subsection{Proof of~\cref{thm:statistical}}

\cfeimprovestat*

\begin{proof}

For a single observation $(\mathbf{x}, y)$, the log-likelihood is:
\begin{equation}
\log p(y|\mathbf{x};\mathbf{w}) = y\log\sigma(\mathbf{w}^\top\mathbf{x}) + (1-y)\log(1-\sigma(\mathbf{w}^\top\mathbf{x}))
\end{equation}
Taking the gradient with respect to $\mathbf{w}$:
\begin{equation}\label{eqn:gradientlr}
\nabla_{\mathbf{w}}\log p(y|\mathbf{x};\mathbf{w}) = (y - \sigma(\mathbf{w}^\top\mathbf{x}))\mathbf{x}
\end{equation}

To prove \cref{thm:statistical}, we first (1) Characterize the Fisher information for individual observations, (2) Establish asymptotic normality of MLE, (3) Compare information matrices of standard vs. CFE-infused datasets, and (4) Apply trace inequality to connect information to estimation error.

\emph{(1) Fisher Information for Logistic Regression}: For a logistic regression model with parameters $\mathbf{w}$, Lets denote  Fisher Information Matrix (FIM) for observations $y, \mathbf{x}$ as:
\begin{equation}
\mathcal{I}(\mathbf{w};y,  \mathbf{x}) = \mathbb{E}_{y, \vx}\left[\nabla_{\mathbf{w}}\log p(y, \mathbf{x};\mathbf{w})\nabla_{\mathbf{w}}\log p(y, \mathbf{x};\mathbf{w})^\top\right]
\end{equation}

\begin{equation}
\nabla_{\mathbf{w}} \log p(y, \mathbf{x}; \mathbf{w}) = \nabla_{\mathbf{w}} \log p(y | \mathbf{x}; \mathbf{w}) + \underbrace{\nabla_{\mathbf{w}} \log p(\mathbf{x})}_{=0}.
\end{equation}

The gradient of $\log p(\mathbf{x})$ is zero because $p(\mathbf{x})$ is independent of the model parameters $\mathbf{w}$.

Using the law of total expectation:

\begin{equation}
\mathcal{I}(\mathbf{w};y, \mathbf{x}) = \mathbb{E}_{\vx} [\mathbb{E}_{y| \vx}\left[\nabla_{\mathbf{w}}\log p(y| \mathbf{x};\mathbf{w})\nabla_{\mathbf{w}}\log p(y| \mathbf{x};\mathbf{w})^\top\right]
\end{equation}

Substituting Equation~\ref{eqn:gradientlr}:
\begin{align}
\mathcal{I}(\mathbf{w};y, \mathbf{x}) & =  \mathbb{E}_\vx\left[ \mathbb{E}_{y|\vx} [(y - \sigma(\mathbf{w}^\top\mathbf{x}))^2\mathbf{x}\mathbf{x}^\top]\right] \\
& =  \mathbb{E}_\vx\left[ \mathbf{x}\mathbf{x}^\top \mathbb{E}_{y|\vx} [(y - \sigma(\mathbf{w}^\top\mathbf{x}))^2]\right]
\end{align}

The term \( \mathbb{E}_{y | \mathbf{x}} \left[ \left(y - \sigma(\mathbf{w}^\top \mathbf{x})\right)^2 \right] \) is the variance of \( y | \mathbf{x} \). Where $y|\vx \sim \text{Bernoulli}(\sigma(\mathbf{w}^\top\mathbf{x}))$, we compute:
\begin{equation}
\mathbb{E}_{y|\vx}[(y - \sigma(\mathbf{w}^\top\mathbf{x}))^2] = \text{Var}(y|\vx) = \sigma(\mathbf{w}^\top\mathbf{x})(1 - \sigma(\mathbf{w}^\top\mathbf{x}))
\end{equation}
Thus:
\begin{equation}
\mathcal{I}(\mathbf{w};y, \mathbf{x}) = \mathbb{E}_{\vx} [\sigma(\mathbf{w}^\top\mathbf{x})(1 - \sigma(\mathbf{w}^\top\mathbf{x}))\mathbf{x}\mathbf{x}^\top]
\end{equation}

The variance term is maximized when $\mathbf{w}^\top\mathbf{x} = 0$ (i.e., at the decision boundary), where it equals 0.25.

\textit{(2) Asymptotic Distribution of MLE}: Under regularity conditions~\cite{bonakdarpour2019asymptotic}, the MLE estimator satisfies:
\begin{equation}
\sqrt{k}(\mathbf{w}_s - \mathbf{w}_t) \xrightarrow{d} \mathcal{N}(0, \mathcal{I}^{-1}(\mathbf{w}_t;\mathcal{D}))
\end{equation}
where $\mathcal{I}(\mathbf{w}_t;\mathcal{D}) = \sum_{i=1}^k \mathcal{I}(\mathbf{w}_t;y_i, \mathbf{x}_i)$ is the total Fisher information of $k$ independent observations of $y_i,\vx_i$ (Additivity property of fisher information~\cite{zegers2015fisher}). 

The mean squared error (MSE)~\cite{avati2023bias} decomposes as:
\begin{equation}
\mathbb{E}\|\mathbf{w}_s - \mathbf{w}_t\|^2 = \underbrace{\text{Tr}(\text{Cov}(\mathbf{w}_s))}_{\text{Variance}} + \underbrace{\|\text{Bias}(\mathbf{w}_s)\|^2}_{\text{Bias}}
\end{equation}
For MLE, $\text{Bias}(\mathbf{w}_s) \to 0$ as $k \to \infty$, so: $
\mathbb{E}\|\mathbf{w}_s - \mathbf{w}_t\|^2 \approx \text{Tr}(\mathcal{I}^{-1}(\mathbf{w}_t;\mathcal{D}))$

The next step of the proof we compare the fisher information between a standard dataset and CFE-infused dataset.

Let $\mathcal{D} = \{ \vx_i \}_{i=1}^k$ be a dataset of $k$ standard samples, and let $\mathcal{D}_{\text{cf}} = \{ \vx_i \}_{i=1}^{k/2} \cup \{ \vx_{c_j} \}_{j=1}^{k/2}$ be an CFE-infused dataset containing $k/2$ standard samples and $k/2$ CFEs.

Standard Samples: Far from decision boundary $\Rightarrow \mathbf{w}_t^\top\mathbf{x}_i \gg 0$ or $\ll 0$. Thus:
\begin{equation}
\sigma(\mathbf{w}_t^\top\mathbf{x}_i)(1 - \sigma(\mathbf{w}_t^\top\mathbf{x}_i)) = \epsilon_i < 0.25
\end{equation}
Their FIM contribution is: $\mathcal{I}(\mathbf{w}_t;\mathbf{x}_i) = \mathbb{E}_{\vx} [\epsilon_i\mathbf{x}_i\mathbf{x}_i^\top]$.

CFE Samples: Near boundary $\Rightarrow \mathbf{w}_t^\top\mathbf{x}_c = 0 \Rightarrow \sigma(0) = 0.5$. Thus:
\begin{equation}
\sigma(\mathbf{w}_t^\top\mathbf{x}_c)(1 - \sigma(\mathbf{w}_t^\top\mathbf{x}_c)) = 0.25
\end{equation}
Their FIM contribution is maximal: $\mathcal{I}(\mathbf{w}_t;\mathbf{x}_c) = \mathbb{E}_{\vx} [0.25\mathbf{x}_c\mathbf{x}_c^\top]$.

Since $\mathbb{E}_{\mathbf{x}}[\mathbf{x}\mathbf{x}^\top] \approx \mathbb{E}_{\mathbf{x}_c}[\mathbf{x}_c\mathbf{x}_c^\top]$ and  $0.25 \gg \epsilon_i$, we have  \( \mathcal{I}(\mathbf{w}_t; \mathcal{D}_{cf}) \succ \mathcal{I}(\mathbf{w}_t; \mathcal{D}) \)  in the Löwner order (see Definition~\ref{def:lowner}). 

\begin{remark}[Feature Spanning] Note that for logistic regression the feature vector is augmented with the parameter bias term, i.e., \(\mathbf{x} = [1, \tilde{\mathbf{x}}^\top]^\top\), hence,the outer product \(\mathbf{x}\mathbf{x}^\top\) has a non-zero norm. The first element of \(\mathbf{x}\) is always \(1\), ensuring \(\|\mathbf{x}\|^2 \geq 1\). Thus, \(\mathbf{x}\mathbf{x}^\top\) cannot be the zero matrix, even if \(\tilde{\mathbf{x}} = \mathbf{0}\). This guarantees that each CFE example \(\mathbf{x}_c\) contributes a non-degenerate rank-1 term to the FIM. 
\end{remark}

The final step of the proof leverages the trace inequality for covariance matrices (see Lemma~\ref{lem:traceInq}). If \( \mathcal{I}(\mathbf{w}_t; \mathcal{D}_{cf}) \succ \mathcal{I}(\mathbf{w}_t; \mathcal{D}) \) then 
$
\text{Tr}(\mathcal{I}^{-1}(\mathbf{w}_t; \mathcal{D}_{cf})) < \text{Tr}(\mathcal{I}^{-1}(\mathbf{w}_t; \mathcal{D}))
$. Thus, CFE infusion reduces parameter estimation error:
\begin{equation}
\mathbb{E}\left[\|\mathbf{w}_s^{\text{(cf)}} - \mathbf{w}_t\|^2\right] < \mathbb{E}\left[\|\mathbf{w}_s - \mathbf{w}_t\|^2\right]
\end{equation}

\begin{remark}[Datapoint Diversity]
For the total FIM \(\mathcal{I}(\mathbf{w}_t; \mathcal{D}_{\text{cf}})\) to be invertible, the set of  feature vectors \(\{\mathbf{x}_i\}\) must span \(\mathbb{R}^d\) which will hold if we have enough samples. 
\end{remark}
\end{proof}

\section{Background on Hausdorff Distance and Proofs of~\cref{lem:boundary} and~\cref{thm:alpha+eps}}
\label{apx:thm2_proof}

This section provides definitions and geometric preliminaries, along with proofs for Lemma~\ref{lem:boundary} and Theorem~\ref{thm:alpha+eps}.

\subsection{Background on Hausdorff Distance }
\begin{definition}[Line Segment]
Let \( \mathbf{x}_i, \mathbf{x}_i' \in \mathbb{R}^{n \times d} \) be two points in the \( n \times d \) space. The line segment $[\mathbf{x}_i, \mathbf{x}_i']$ connecting \( \mathbf{x}_i \) and \( \mathbf{x}_i' \) is defined as the set of points \( \gamma(\lambda) \) for \( \lambda \in [0, 1] \), where
\[
\gamma(\lambda) = (1 - \lambda) \mathbf{x}_i + \lambda \mathbf{x}_i', \quad \lambda \in [0, 1].
\]
This defines all the points on a space between \( \mathbf{x}_i \) and \( \mathbf{x}_i' \) in \( \mathbb{R}^{n \times d} \).
\end{definition}

\begin{lemma}[Intermediate Value Theorem]\label{lem:intvaluetheorm}
Let $f: [a, b] \to \mathbb{R}$ be a continuous function, and let $f(a) \neq f(b)$. If $y$ is any value between $f(a)$ and $f(b)$, then there exists $c \in (a, b)$ such that $f(c) = y$.
\end{lemma}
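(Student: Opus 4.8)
The plan is to prove the Intermediate Value Theorem directly from the completeness of $\mathbb{R}$ (the least upper bound property), which is the standard and most self-contained route requiring no machinery beyond continuity. First I would reduce to a single case by symmetry: without loss of generality assume $f(a) < y < f(b)$, since the reverse ordering $f(b) < y < f(a)$ follows by applying the result to $-f$ and $-y$. This fixes one sign convention and lets the rest of the argument proceed with a single set of inequalities.

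Next I would introduce the candidate crossing point as a supremum. Define the set
\[
S = \{\, x \in [a,b] : f(x) < y \,\}.
\]
This set is nonempty because $a \in S$ (as $f(a) < y$), and it is bounded above by $b$. By the least upper bound property of $\mathbb{R}$, the supremum $c = \sup S$ exists and satisfies $c \in [a,b]$. The goal is then to show $f(c) = y$, which I would establish by ruling out both strict inequalities $f(c) < y$ and $f(c) > y$ using continuity of $f$ at $c$.

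To rule out $f(c) < y$: by continuity there is a one-sided neighborhood of $c$ on which $f$ stays strictly below $y$; since $f(b) > y$ forces $c \neq b$, there is room to the right of $c$, so I can exhibit points just above $c$ that still lie in $S$, contradicting that $c$ is an upper bound. To rule out $f(c) > y$: by continuity there is a neighborhood of $c$ on which $f$ stays strictly above $y$, so no element of $S$ lies in that neighborhood; this produces a strictly smaller upper bound for $S$, contradicting $c = \sup S$. Hence $f(c) = y$. Finally, because $f(a) < y < f(b)$, neither endpoint can equal $c$, so in fact $c \in (a,b)$ as claimed.

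The main obstacle is the careful $\varepsilon$–$\delta$ bookkeeping in the two contradiction arguments: correctly extracting points of $S$ strictly to the right of $c$ (using that $c$ is not the right endpoint) in the first case, and correctly converting a neighborhood on which $f > y$ into a \emph{strictly} smaller upper bound in the second. Each individual step is elementary, but the directions of the inequalities and the treatment of the endpoints must be tracked precisely to keep the supremum argument airtight.
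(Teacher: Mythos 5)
Your proof is correct, and it is the canonical one: reduce by symmetry to $f(a) < y < f(b)$, take $c = \sup\{x \in [a,b] : f(x) < y\}$ (nonempty, bounded above, so the supremum exists by completeness of $\mathbb{R}$), and eliminate both $f(c) < y$ and $f(c) > y$ by continuity — the first because $c \neq b$ would then admit points of $S$ strictly to the right of $c$, the second because $S$ would then be bounded above by something strictly smaller than $c$. Your final observation that $f(a) < y < f(b)$ excludes $c = a$ and $c = b$, placing $c$ in the open interval, is also handled correctly. Note, however, that the paper itself offers \emph{no} proof of this lemma: it is stated in the appendix purely as classical background (cited in the proof of the boundary-crossing lemma, where it is applied to $g(\lambda) = f_t((1-\lambda)\vx_i + \lambda \vx_i')$ on $[0,1]$), so there is no paper argument to compare against. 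What your write-up buys, relative to the paper's treatment, is self-containedness: you ground the result directly in the least upper bound property with the endpoint bookkeeping made explicit, whereas the paper simply imports the theorem as known. For the paper's purposes the citation-only treatment is entirely appropriate, and your proof would be a standard textbook insertion rather than a correction or a genuinely different route to anything the paper argues.
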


\hausdorff*

\subsection{Proofs of~\cref{lem:boundary} and~\cref{thm:alpha+eps}}

\boundarycrossing*

\begin{figure}
    \centering
    \includegraphics[width=0.3\linewidth]{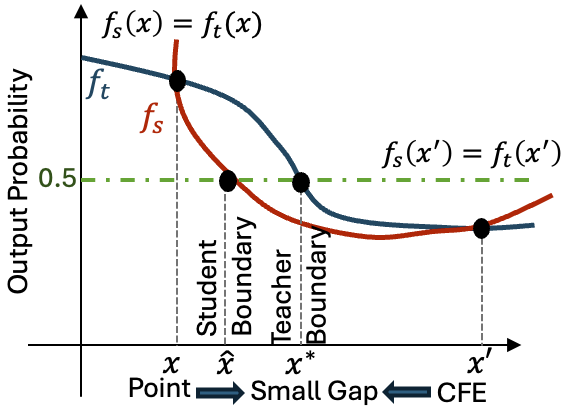}
    \caption{Intuition for~\cref{thm:alpha+eps}}
\end{figure}

\begin{proof}
Define the line segment from $x_i$ to $x_i'$ using a parameterization: $
\gamma(\lambda) = (1 - \lambda) x_i + \lambda x_i', \quad \text{for } \lambda \in [0,1].
$

This defines a continuous path from $x_i$ to $x_i'$ in $\mathbb{R}^d$. Now define the real-valued function $g: [0,1] \to \mathbb{R}$ by: $
g(\lambda) = f_t(\gamma(\lambda)) = f_t((1 - \lambda) x_i + \lambda x_i').
$

Since $f_t$ is continuous on $\mathbb{R}^d$, and $\gamma(\lambda)$ is continuous in $\lambda$, the composition $g(\lambda)$ is continuous on the closed interval $[0,1]$.

Now, evaluate the endpoints of this function: $
g(0) = f_t(x_i) < 0.5, g(1) = f_t(x_i') > 0.5.
$

Thus, we have $g(0) < 0.5 < g(1)$, and by the Intermediate Value Theorem (see Lemma~\ref{lem:intvaluetheorm}), since $g$ is continuous on $[0,1] $, there exists $\lambda^\star \in (0,1)$ such that: $
g(\lambda^\star) = 0.5.
$

Define $x_i^\star = \gamma(\lambda^\star) = (1 - \lambda^\star) x_i + \lambda^\star x_i' \in [x_i, x_i']$. Then: $
f_t(x_i^\star) = g(\lambda^\star) = 0.5.
$

Hence, the point $x_i^\star \in [x_i, x_i']$ lies on the segment and satisfies $f_t(x_i^\star) = 0.5$, as required.
\end{proof}

\teachstudentprox*

\begin{proof}

To prove Theorem~\ref{thm:alpha+eps}, we bound the Hausdorff distance between the student's and teacher's decision boundaries using the given assumptions. 
We bound each term separately of the Hausdorff distance (see Defintion~\ref{def:hausdorff}).

We first bound \( \sup_{\vx \in \mathcal{M}_t} \inf_{\mathbf{u} \in \mathcal{M}_s} \|\vx - \mathbf{u}\|_F \):

For any \( a \in \mathcal{M}_t \), by assumption (A3), there exists a CFE pair \( (\vx_i, \vx_i') \) with teacher crossing point \( \vx_i^\star \in \mathcal{M}_t \) such that:
\begin{equation}\label{eqn:cfecrossing}
\|a - \vx_i^\star\|_F \leq \varepsilon.
\end{equation}

The segment \( [\vx_i, \vx_i'] \) has length \( \|\vx_i - \vx_i'\|_F \leq \alpha \) (A1). By Lemma~\ref{lem:boundary} and (A2) Exact distillation, the student’s boundary \( \mathcal{M}_s \) intersects \( [\vx_i, \vx_i'] \) at some \( \mathbf{u}_i^\star \in \mathcal{M}_s \). Since \( \vx_i^\star \) and \( \mathbf{u}_i^\star \) lie on \( [\vx_i, \vx_i'] \), their distance satisfies:
\begin{equation}\label{eqn:segment}
\|\vx_i^\star - \mathbf{u}_i^\star\|_F \leq \|\vx_i - \vx_i'\|_F \leq \alpha.
\end{equation}

Combining Equation~\ref{eqn:segment} and~\ref{eqn:cfecrossing}:
\begin{equation}
\|a - \mathbf{u}_i^\star\|_F \leq \|a - \vx_i^\star\|_F + \|\vx_i^\star - \mathbf{u}_i^\star\|_F \leq \varepsilon + \alpha.
\end{equation}
Thus, \( \inf_{\mathbf{u} \in \mathcal{M}_s} \|a - \mathbf{u}\|_F \leq \varepsilon + \alpha \). Taking the supremum over \( a \in \mathcal{M}_t \):
\begin{equation}
\sup_{\vx \in \mathcal{M}_t} \inf_{\mathbf{u} \in \mathcal{M}_s} \|\vx - \mathbf{u}\|_F \leq \varepsilon + \alpha.
\end{equation}

Next we bound \( \sup_{\mathbf{u} \in \mathcal{M}_s} \inf_{\vx \in \mathcal{M}_t} \|\mathbf{u} - \vx\|_F \):

From (A1), the distance between the student's cutpoint \( \mathbf{u}_i^* \) and the teacher's cutpoint \( \vx_i^\star \) satisfies:

\begin{equation}
\|\mathbf{u}_i^* - \vx_i^\star\|_F \leq \|\vx_i - \vx_i'\|_F \leq \alpha
\end{equation}
For any other \( \mathbf{u} \in \mathcal{M}_s \):
\begin{equation}
\|\mathbf{u} - \vx_i^\star\|_F \leq \|\mathbf{u} -\mathbf{u}_i^* \|_F+ \|\mathbf{u}_i^* - \vx_i^*\|_F \leq \varepsilon + \alpha,
\end{equation}

Assuming the CFE pairs \( (\vx_i, \vx_i') \) intersection points are $\varepsilon-$spread (well spread) along the student decision boundary.

Since \( \vx_i^\star \in \mathcal{M}_t \), we have:
\begin{equation}
\inf_{\vx \in \mathcal{M}_t} \|\mathbf{u} - \vx\|_F \leq \|\mathbf{u} - \vx_i^\star\|_F \leq \varepsilon + \alpha.
\end{equation}

Taking the supremum over \( \mathbf{u} \in \mathcal{M}_s \):
\begin{equation}
\sup_{\mathbf{u} \in \mathcal{M}_s} \inf_{\vx \in \mathcal{M}_t} \|\mathbf{u} - \vx\|_F \leq \varepsilon + \alpha.
\end{equation}

Combining both bounds, the Hausdorff distance is the maximum of the two suprema:
\begin{equation}
\mathrm{H}(\mathcal{M}_s, \mathcal{M}_t) \leq \max \left\{ \varepsilon + \alpha, \; \varepsilon + \alpha \right\} = \varepsilon + \alpha
\end{equation}
\end{proof}

\section{Additional Experiments and Details} \label{apx:exp}

This appendix provides additional experimental details and results to supplement the main paper. In~\cref{apx:data}, we describe the datasets used in our few-shot experiments and preprocessing choices. In~\cref{apx:prompt-templates}, we include the prompt templates used to generate counterfactual explanations. Baseline methods are summarized in~\cref{apx:baseline}, and complete hyperparameter settings are detailed in~\cref{apx:modelhyp}. Finally,~\cref{apx:discussion} presents extended results using the smaller \debertaxs student and the \qwen model family.

\subsection{Datasets Details}\label{apx:data} We evaluate \ours across six text classification benchmarks that span a range of domains. For each $k$-shot setup, we sample a balanced subset from the processed training data, selecting $k/2$ examples per class. All experiments are repeated across 5 random seeds, each with a different sampled subset. 

\begin{itemize}[leftmargin=*]
    \item \yelp~\cite{yelp_open_dataset}: We use the Yelp Review Full dataset, filtering for reviews with at most 250 tokens and discarding neutral labels. Labels are binarized: 1–2 as negative and 4–5 as positive. The processed dataset contains 106,624 training examples, 1,000 for validation, and 7,074 for testing, with a slightly imbalanced class distribution (64\% negative).
    \item \imdb~\cite{maas2011imdb}: We retain only reviews with shorter lengths. The original test and unsupervised splits are repurposed as validation and test sets, respectively. The resulting data includes 782 training, 858 validation, and 1,578 test samples, with the test set unlabeled.

    \item \sst2~\cite{socher2013sst2}: We use the full GLUE-provided training, validation, and test splits without modification. The train/val sets contain 67,349 and 872 examples, respectively. The test set has 1,821 unlabeled examples.

    \item \cola~\cite{warstadt2018cola}: We adopt the standard GLUE splits of the CoLA dataset, yielding 8,551 training, 1,043 validation, and 1,063 unlabeled test samples. The task is binary classification of linguistic acceptability.

    \item \sentiment~\cite{go2009sentiment140}: We filter the  dataset to exclude neutral tweets. The final dataset includes 1,598,400 training, 1,600 validation, and 359 test examples, with balanced label distributions.

    \item \amazon~\cite{McAuley2013amazon}: We select examples with shortest length. The processed data includes 1,111 training and 113 validation samples, with roughly balanced sentiment labels.

\end{itemize}

\subsection{Counterfactual Explanation Generation Prompt Templates}
\label{apx:prompt-templates}

Here we provide prompt templates used for counterfactual explanation generation across datasets. Each prompt instructs the model to minimally modify a given input to flip the class label (e.g., sentiment or grammaticality) while preserving meaning and structure. We used \texttt{gpt-4o-2024-11-20}~\cite{openai2024gpt4ocard} for our CFE generation.

\vspace{1em}

\begin{tcolorbox}[title=\texttt{SST-2 / IMDB / Sentiment140 / Amazon}, colback=gray!5, colframe=black!50, fonttitle=\bfseries]
\small
You are an AI assistant tasked with generating counterfactual explanations for sentiment analysis.

Given a sentence and its true sentiment label, your goal is to make the minimal necessary change to flip the sentiment while preserving the structure and meaning as much as possible.

For example, if the input is:

Sentence: "I love this movie."

True sentiment: Positive

A suitable counterfactual explanation would be: "I dislike this movie."

Now, generate a counterfactual explanation for the following sentence:

Sentence: \texttt{\{sentence\}}

True sentiment: \texttt{\{sentiment\}}

Return only the counterfactual sentence, without any additional information.
\end{tcolorbox}

\vspace{0.75em}

\begin{tcolorbox}[title=\texttt{Yelp}, colback=gray!5, colframe=black!50, fonttitle=\bfseries]
\small
You are an AI assistant tasked with generating counterfactual explanations for sentiment analysis of Yelp reviews.

Given a sentence (a Yelp review) and its true sentiment label (positive or negative), your goal is to make the minimal necessary change to flip the sentiment while preserving the structure and meaning as much as possible.

For example, if the input is:

Sentence: "This restaurant is fantastic, the food was amazing!"

True sentiment: Positive

A suitable counterfactual explanation would be: "This restaurant is terrible, the food was awful!"

Now, generate a counterfactual explanation for the following sentence:

Sentence: \texttt{\{sentence\}}

True sentiment: \texttt{\{sentiment\}}

Return only the counterfactual sentence, without any additional information.
\end{tcolorbox}

\vspace{0.75em}

\begin{tcolorbox}[title=\texttt{CoLA}, colback=gray!5, colframe=black!50, fonttitle=\bfseries]
\small
You are an AI assistant tasked with generating counterfactual explanations for grammaticality judgment.

Given a sentence and its true grammaticality label (Acceptable or Unacceptable), your goal is to make the minimal necessary change to flip the grammaticality while preserving the structure and meaning as much as possible.

For example, if the input is:

Sentence: "She is going to the store."

True grammaticality: Acceptable

A suitable counterfactual explanation would be: "She is go to the store."

Now, generate a counterfactual explanation for the following sentence:

Sentence: \texttt{\{sentence\}}

True grammaticality: \texttt{\{sentiment\}}

Return only the counterfactual sentence, without any additional information.
\end{tcolorbox}

\subsection{Baselines Details}\label{apx:baseline}
We compare \ours against three task-aware knowledge distillation methods widely used for distillation. \ours uses $k/2$ original samples and their $k/2$ corresponding CFE (a total of $k$ shots) while the baseline methods are trained on $k$ original samples.

\begin{itemize}[leftmargin=*]
\item \textbf{Knowledge Distillation (KD)}~\cite{sanh2020distilbertdistill}: A classical distillation approach where the student model learns to mimic the teacher's soft target probabilities using Kullback-Leibler (KL) divergence. This method transfers predictive behavior but does not supervise intermediate representations.

\item \textbf{Layer-wise Distillation (LWD)}~\cite{jiao2019tinybert}: An extension of KD that additionally aligns the student’s intermediate hidden representations with those of the teacher. This is typically done via a mean squared error loss over corresponding layers, encouraging the student to internalize not only the final outputs but also the hierarchical feature representations of the teacher.

\item \textbf{Task-aware Layer-wise Distillation (TED)}~\cite{taskAware}: TED augments LWD with learned neural filters at each layer of both teacher and student models. These filters are trained to select task-relevant information from intermediate representations before computing the distillation loss. This selective transfer enables more effective compression by focusing on information critical to task performance.
\end{itemize}

\subsection{Models and Hyperparameters}\label{apx:modelhyp}
\begin{itemize}[leftmargin=*]

\item \textbf{DeBERTa-V3~\cite{he2021debertav3}.} We fine-tune the teacher model using DeBERTaV3-base, initialized with a classification head for each target task. For the teacher, we use a dropout rate of $0.1$, linear learning rate decay, and train for $8$ epochs with a fixed learning rate of $2 \times 10^{-5}$ and batch sizes of \{$32$, $64$\}. Optimization is performed using Adam with $\epsilon = 1 \times 10^{-6}$, $\beta_1 = 0.9$, and $\beta_2 = 0.98$, without weight decay. Mixed-precision training with FP16 is used throughout.

For distillation, the student is initialized from a pre-trained \debertas or \debertaxs model. We search learning rates in the range $[1 \times 10^{-5}, 5 \times 10^{-5}]$, and use a fixed batch size of $8$ in our few-shot experiments. All student models are trained for $10$ epochs using Adam with the same optimizer settings as the teacher. For KD and LWD baselines, we set the distillation loss weight to $20$. For the TED baseline, we use the same hyperparameters for both the filter training and distillation phases, consistent with~\cite{taskAware}.

\item\textbf{Qwen2.5~\cite{qwen2025qwen25technicalreport}.} We use \texttt{Qwen/Qwen2.5-1.5B} as the teacher and \texttt{Qwen/Qwen2.5-0.5B} as the student, both loaded from Hugging Face with sequence classification heads. We fine-tune using a batch size of $16$ and train for $10$ epochs. For KD and LWD baselines, we set the distillation loss weights to $20$ and $5$, respectively. All other settings closely follow the DeBERTaV3 setup, including the optimizer, learning rate schedule, and use of mixed-precision training.

All experiments are conducted on a server equipped with four NVIDIA RTX A6000 GPUs.

\end{itemize}
\subsection{Additional Results and Discussion.}\label{apx:discussion}
We provide results using the smaller \debertaxs (22M parameters) student as well as the full evaluation table for the \qwen family. Results for the smaller \debertaxs student are shown in~\cref{tab:deberta_all_datasets}. While experiments using the \qwent teacher and the \qwens student are shown in~\cref{tab:qwen_cola_yelp_amazon}. We also include the full fine-tuned teacher model accuracies across all datasets in~\cref{tab:teacher_accuracy}, which are used as supervision targets during knowledge distillation. All experimental results are averaged over five runs, with the mean and standard deviation reported.

\renewcommand{\arraystretch}{1.1}
\begin{table}[t]
\centering
\caption{\small \textbf{Teacher accuracy (\%) across datasets.} Reporting \qwent and \debertab when fine-tuned on full training dataset for each benchmark. These teachers are used as sources of supervision for student models during knowledge distillation.}
\vspace{0.5em}
\resizebox{0.95\textwidth}{!}{%
\begin{tabular}{clcccccc}
\toprule
\multicolumn{2}{c}{Model} & \textbf{\amazon} & \textbf{\cola} & \textbf{\imdb} & \textbf{\sst} & \textbf{\yelp} & \textbf{\sentiment} \\
\midrule
& \qwent     & 88.5 & 83.0 & 94.3 & 93.7 & 95.4 & 86.1 \\
& \debertab  & 86.7 & 87.5 & 93.8 & 95.8 & 95.6 & 86.8 \\
\bottomrule
\end{tabular}%
}
\label{tab:teacher_accuracy}
\end{table}

\renewcommand{\arraystretch}{1.1}
\begin{table}[t]
\centering
\scriptsize
\caption{\small \textbf{Classification accuracy ($\pm$ std) across datasets with varying total training sizes $k$.} For \ours, training data consists of $k/2$ standard and $k/2$ CFEs. Teacher model \debertab and student model \debertaxs}
\vspace{0.5em}
\resizebox{\linewidth}{!}{%
\begin{tabular}{clcccccc}
\toprule
\multicolumn{2}{c}{} & \multicolumn{6}{c}{\textbf{Total Samples ($k$)}} \\
\cmidrule(rl){3-8}
\textbf{Dataset} & \textbf{Method} & \textbf{8} & \textbf{16} & \textbf{32} & \textbf{64} & \textbf{128} & \textbf{512} \\
\midrule

\multirow{4}{*}{\centering \shortstack{\texttt{Amazon}\\\texttt{Polarity}}}
  & KD        & 0.628\ts{0.055} & 0.690\ts{0.034} & 0.766\ts{0.032} & 0.827\ts{0.021} & \textbf{0.835}\ts{0.037} & 0.846\ts{0.009} \\
  & + \ours   & \textbf{0.697}\ts{0.117} & \textbf{0.782}\ts{0.033} & \textbf{0.823}\ts{0.018} & \textbf{0.844}\ts{0.009} & 0.814\ts{0.013} & \textbf{0.855}\ts{0.018} \\
\cmidrule{2-8}
  & LWD       & 0.660\ts{0.061} & 0.699\ts{0.044} & 0.777\ts{0.042} & 0.825\ts{0.015} & \textbf{0.839}\ts{0.015} & 0.839\ts{0.013} \\
  & + \ours   & \textbf{0.712}\ts{0.039} & \textbf{0.743}\ts{0.051} & \textbf{0.811}\ts{0.016} & \textbf{0.832}\ts{0.015} & 0.830\ts{0.010} & \textbf{0.850}\ts{0.013} \\
\midrule

\multirow{4}{*}{\centering \cola}
  & KD        & 0.724\ts{0.045} & 0.735\ts{0.052} & 0.776\ts{0.026} & 0.773\ts{0.040} & 0.799\ts{0.011} & 0.806\ts{0.004} \\
  & + \ours   & \textbf{0.752}\ts{0.042} & \textbf{0.766}\ts{0.018} & \textbf{0.790}\ts{0.012} & \textbf{0.799}\ts{0.004} & \textbf{0.803}\ts{0.008} & \textbf{0.817}\ts{0.007} \\
\cmidrule{2-8}
  & LWD       & \textbf{0.699}\ts{0.042} & 0.744\ts{0.039} & 0.755\ts{0.043} & 0.787\ts{0.008} & \textbf{0.803}\ts{0.009} & 0.808\ts{0.008} \\
  & + \ours   & 0.685\ts{0.190} & \textbf{0.780}\ts{0.018} & \textbf{0.790}\ts{0.004} & \textbf{0.798}\ts{0.007} & 0.802\ts{0.005} & \textbf{0.813}\ts{0.003} \\
\midrule

\multirow{4}{*}{\centering \imdb}
  & KD        & 0.743\ts{0.070} & 0.849\ts{0.037} & 0.882\ts{0.032} & \textbf{0.904}\ts{0.004} & \textbf{0.912}\ts{0.005} & \textbf{0.920}\ts{0.004} \\
  & + \ours   & \textbf{0.893}\ts{0.007} & \textbf{0.896}\ts{0.007} & \textbf{0.900}\ts{0.005} & 0.904\ts{0.005} & 0.910\ts{0.008} & 0.918\ts{0.003} \\
\cmidrule{2-8}
  & LWD       & 0.773\ts{0.034} & 0.823\ts{0.041} & 0.876\ts{0.027} & \textbf{0.903}\ts{0.008} & \textbf{0.915}\ts{0.007} & 0.914\ts{0.014} \\
  & + \ours   & \textbf{0.877}\ts{0.022} & \textbf{0.888}\ts{0.006} & \textbf{0.900}\ts{0.005} & 0.902\ts{0.009} & 0.911\ts{0.008} & \textbf{0.921}\ts{0.001} \\
\midrule

\multirow{4}{*}{\centering \sst}
  & KD        & 0.591\ts{0.040} & 0.666\ts{0.030} & 0.754\ts{0.047} & 0.816\ts{0.024} & 0.861\ts{0.015} & 0.887\ts{0.033} \\
  & + \ours   & \textbf{0.685}\ts{0.112} & \textbf{0.763}\ts{0.084} & \textbf{0.829}\ts{0.028} & \textbf{0.850}\ts{0.015} & \textbf{0.862}\ts{0.016} & \textbf{0.905}\ts{0.011} \\
\cmidrule{2-8}
  & LWD       & 0.580\ts{0.064} & 0.664\ts{0.024} & 0.726\ts{0.036} & 0.818\ts{0.019} & 0.847\ts{0.029} & \textbf{0.912}\ts{0.005} \\
  & + \ours   & \textbf{0.658}\ts{0.107} & \textbf{0.675}\ts{0.074} & \textbf{0.839}\ts{0.017} & \textbf{0.841}\ts{0.019} & \textbf{0.859}\ts{0.016} & 0.877\ts{0.044} \\
\midrule

\multirow{4}{*}{\centering \yelp}
  & KD        & 0.704\ts{0.062} & \textbf{0.793}\ts{0.042} & 0.861\ts{0.011} & 0.887\ts{0.004} & \textbf{0.907}\ts{0.007} & \textbf{0.922}\ts{0.008} \\
  & + \ours   & \textbf{0.759}\ts{0.086} & 0.758\ts{0.084} & \textbf{0.870}\ts{0.008} & \textbf{0.889}\ts{0.009} & 0.897\ts{0.009} & 0.920\ts{0.006} \\
\cmidrule{2-8}
  & LWD       & 0.714\ts{0.049} & \textbf{0.815}\ts{0.028} & 0.870\ts{0.013} & 0.875\ts{0.012} & \textbf{0.907}\ts{0.006} & \textbf{0.925}\ts{0.006} \\
  & + \ours   & \textbf{0.758}\ts{0.069} & 0.757\ts{0.082} & \textbf{0.873}\ts{0.012} & \textbf{0.884}\ts{0.007} & 0.894\ts{0.009} & 0.919\ts{0.006} \\
\midrule

\multirow{4}{*}{\centering \texttt{Sent140}}
  & KD        & \textbf{0.580}\ts{0.032} & 0.594\ts{0.026} & 0.634\ts{0.047} & 0.681\ts{0.046} & 0.740\ts{0.012} & \textbf{0.796}\ts{0.013} \\
  & + \ours   & 0.573\ts{0.078} & \textbf{0.612}\ts{0.064} & \textbf{0.721}\ts{0.019} & \textbf{0.737}\ts{0.030} & \textbf{0.767}\ts{0.014} & 0.795\ts{0.006} \\
\cmidrule{2-8}
  & LWD       & \textbf{0.576}\ts{0.038} & 0.585\ts{0.025} & 0.624\ts{0.029} & 0.684\ts{0.044} & 0.728\ts{0.035} & \textbf{0.799}\ts{0.007} \\
  & + \ours   & 0.561\ts{0.064} & \textbf{0.592}\ts{0.050} & \textbf{0.681}\ts{0.043} & \textbf{0.723}\ts{0.025} & \textbf{0.763}\ts{0.019} & 0.773\ts{0.026} \\
\bottomrule
\end{tabular}%
}
\label{tab:deberta_all_datasets}
\end{table}

\renewcommand{\arraystretch}{1.1}
\begin{table}[t]
\centering
\scriptsize
\caption{\small \textbf{Classification accuracy ($\pm$ std) of \qwen across datasets with varying training sizes $k$.} For \ours, training data consists of $k/2$ standard and $k/2$ CFEs. Teacher model is \qwent and student model is \qwens.}
\vspace{0.5em}
\resizebox{\linewidth}{!}{%
\begin{tabular}{clcccccc}
\toprule
\multicolumn{2}{c}{} & \multicolumn{6}{c}{\textbf{Total Samples ($k$)}} \\
\cmidrule(rl){3-8}
\textbf{Dataset} & \textbf{Method} & \textbf{8} & \textbf{16} & \textbf{32} & \textbf{64} & \textbf{128} & \textbf{512} \\
\midrule
\multirow{4}{*}{\centering \cola}
  & KD        & 0.681\ts{0.012} & 0.676\ts{0.023} & 0.668\ts{0.042} & 0.654\ts{0.032} & 0.676\ts{0.020} & 0.732\ts{0.014} \\
  & + \ours   & \textbf{0.683}\ts{0.016} & \textbf{0.686}\ts{0.018} & \textbf{0.697}\ts{0.015} & \textbf{0.711}\ts{0.020} & \textbf{0.736}\ts{0.017} & \textbf{0.757}\ts{0.011} \\
\cmidrule{2-8}
  & LWD       & 0.681\ts{0.012} & 0.657\ts{0.031} & 0.678\ts{0.018} & 0.650\ts{0.039} & 0.636\ts{0.029} & 0.712\ts{0.014} \\
  & + \ours   & \textbf{0.682}\ts{0.018} & \textbf{0.687}\ts{0.013} & \textbf{0.704}\ts{0.010} & \textbf{0.714}\ts{0.020} & \textbf{0.719}\ts{0.022} & \textbf{0.755}\ts{0.013} \\
\midrule
\multirow{4}{*}{\centering \yelp}
  & KD        & 0.684\ts{0.021} & 0.759\ts{0.040} & 0.827\ts{0.030} & 0.861\ts{0.017} & \textbf{0.887}\ts{0.012} & \textbf{0.920}\ts{0.010} \\
  & + \ours   & \textbf{0.745}\ts{0.029} & \textbf{0.779}\ts{0.048} & \textbf{0.828}\ts{0.072} & \textbf{0.886}\ts{0.007} & 0.883\ts{0.010} & 0.916\ts{0.008} \\
\cmidrule{2-8}
  & LWD       & 0.685\ts{0.019} & 0.777\ts{0.036} & 0.837\ts{0.027} & 0.876\ts{0.020} & \textbf{0.898}\ts{0.008} & \textbf{0.920}\ts{0.005} \\
  & + \ours   & \textbf{0.746}\ts{0.028} & \textbf{0.778}\ts{0.035} & \textbf{0.847}\ts{0.020} & \textbf{0.876}\ts{0.014} & 0.883\ts{0.010} & 0.909\ts{0.009} \\
\midrule
\multirow{4}{*}{\centering \shortstack{\texttt{Amazon}\\\texttt{Polarity}}}
  & KD        & 0.589\ts{0.057} & 0.635\ts{0.044} & 0.706\ts{0.083} & 0.781\ts{0.033} & \textbf{0.807}\ts{0.031} & \textbf{0.862}\ts{0.013} \\
  & + \ours   & \textbf{0.605}\ts{0.051} & \textbf{0.660}\ts{0.042} & \textbf{0.712}\ts{0.077} & \textbf{0.793}\ts{0.030} & 0.805\ts{0.041} & 0.835\ts{0.021} \\
\cmidrule{2-8}
  & LWD       & 0.589\ts{0.057} & 0.628\ts{0.096} & 0.680\ts{0.052} & 0.779\ts{0.026} & 0.823\ts{0.027} & \textbf{0.858}\ts{0.015} \\
  & + \ours   & \textbf{0.607}\ts{0.051} & \textbf{0.662}\ts{0.060} & \textbf{0.692}\ts{0.080} & \textbf{0.795}\ts{0.041} & \textbf{0.823}\ts{0.023} & 0.853\ts{0.020} \\

  \midrule
\multirow{4}{*}{\centering \imdb}
  & KD        & 0.678\ts{0.054} & 0.758\ts{0.079} & 0.817\ts{0.057} & \textbf{0.890}\ts{0.017} & 0.903\ts{0.012} & \textbf{0.926}\ts{0.003} \\
  & + \ours   & \textbf{0.800}\ts{0.054} & \textbf{0.845}\ts{0.061} & \textbf{0.877}\ts{0.038} & 0.889\ts{0.014} & \textbf{0.912}\ts{0.010} & 0.921\ts{0.003} \\
\cmidrule{2-8}
  & LWD       & 0.678\ts{0.054} & 0.740\ts{0.076} & 0.832\ts{0.035} & 0.883\ts{0.014} & 0.906\ts{0.014} & \textbf{0.925}\ts{0.003} \\
  & + \ours   & \textbf{0.800}\ts{0.055} & \textbf{0.835}\ts{0.048} & \textbf{0.869}\ts{0.012} & \textbf{0.893}\ts{0.013} & \textbf{0.909}\ts{0.008} & 0.920\ts{0.007} \\
  
\midrule
\multirow{4}{*}{\centering \sst}
  & KD        & 0.568\ts{0.061} & 0.621\ts{0.084} & 0.719\ts{0.102} & \textbf{0.827}\ts{0.038} & \textbf{0.878}\ts{0.020} & \textbf{0.904}\ts{0.010} \\
  & + \ours   & \textbf{0.578}\ts{0.064} & \textbf{0.663}\ts{0.081} & \textbf{0.767}\ts{0.085} & 0.779\ts{0.137} & 0.870\ts{0.019} & 0.886\ts{0.005} \\
\cmidrule{2-8}
  & LWD       & 0.568\ts{0.062} & 0.642\ts{0.107} & 0.704\ts{0.065} & \textbf{0.825}\ts{0.034} & \textbf{0.869}\ts{0.026} & \textbf{0.890}\ts{0.010} \\
  & + \ours   & \textbf{0.577}\ts{0.063} & \textbf{0.677}\ts{0.076} & \textbf{0.782}\ts{0.133} & 0.779\ts{0.085} & 0.792\ts{0.118} & 0.878\ts{0.011} \\

\midrule
\multirow{4}{*}{\centering \texttt{Sent140}}
  & KD        & \textbf{0.586}\ts{0.047} & \textbf{0.599}\ts{0.047} & \textbf{0.641}\ts{0.030} & 0.708\ts{0.027} & 0.756\ts{0.020} & \textbf{0.813}\ts{0.010} \\
  & + \ours   & 0.556\ts{0.038} & 0.591\ts{0.046} & 0.616\ts{0.055} & \textbf{0.711}\ts{0.061} & \textbf{0.757}\ts{0.023} & 0.805\ts{0.010} \\
\cmidrule{2-8}
  & LWD       & \textbf{0.587}\ts{0.051} & \textbf{0.596}\ts{0.038} & \textbf{0.639}\ts{0.063} & \textbf{0.718}\ts{0.038} & 0.765\ts{0.024} & 0.805\ts{0.011} \\
  & + \ours   & 0.556\ts{0.038} & 0.588\ts{0.059} & 0.621\ts{0.051} & 0.715\ts{0.059} & \textbf{0.765}\ts{0.012} & \textbf{0.805}\ts{0.008} \\

\bottomrule
\end{tabular}%
}
\label{tab:qwen_cola_yelp_amazon}
\end{table}

 Overall, our findings corroborate the central insight that infusing CFEs into knowledge distillation significantly boosts model performance in few-shot settings. For the smaller \debertaxs student, we observe that the benefits of CFE infusion remain substantial across tasks, especially when $k \leq 64$. For example, on \imdb at $k=8$, KD + \ours improves from 74.3\% to 89.3\%, and LWD + \ours improves from 77.3\% to 87.7\%, showing that even with a much smaller student, CFEs offer a powerful training signal. Similar patterns are seen on \sst and \amazon. While the performance gap narrows at higher $k$ values, our method still matches or slightly outperforms standard distillation, despite using only half as many real samples. These results highlight the scalability of \ours across student model sizes.

We also evaluate \ours on \qwen models, using \qwent as the teacher and \qwens as the student.  Results on \cola, \yelp, \amazon, and \imdb show that our method consistently outperforms standard KD and LWD, particularly in few-shot regimes. On \imdb with $k{=}8$, KD + \ours reaches 80.0\% vs. 67.8\% for standard KD - a remarkable 12.2 percentage point gain. Similarly, LWD + \ours improves CoLA accuracy by 8.3 points at $k{=}128$ (71.9\% vs. 63.6\%). With ($k{=}8$), \ours boosts Yelp performance by 6.1 points for both KD (74.5\% vs. 68.4\%) and LWD (74.6\% vs. 68.5\%).  These gains demonstrate the generality of our approach: it is effective even for decoder transformer families like \qwen.

Taken together, our findings affirm the broad applicability of CFE-infused distillation. The consistent improvements across datasets, model families, and student capacities support our central claim: CFEs are a powerful, data-efficient tool for improving teacher-student alignment in low-resource scenarios.

\subsection{Ablation Results}
We perform the following ablations: (1) \textit{On template designing and prompt choices}. We vary prompt templates for generating CFEs and observed that \ours is robust to prompt choices, showing low standard deviation across variants  and consistently outperforming the KD baseline few shot settings (see Table~\ref{tab:cod_results_prompt}). One possible direction could be to use  automatic prompt generation methods~\cite{levi2024intent}, however these are typically more compute-intensive. (2) \textit{Computational and Memory Requirements.}
We assess the computational efficiency of \ours under varying few-shot budgets $k$ using the \texttt{codecarbon} package~\cite{codecarbon} to track runtime and energy consumption (see Table~\ref{tab:compute-ablation}). (3) \textit{Effect of soft-label supervision.} To study the role of soft-labels in our approach, we remove or corrupting the teacher’s soft labels (see Table~\ref{tab:softlabel-ablation}). Removing the soft label term ($\alpha = 0$) leads to a substantial drop in performance across all shot levels. Although \ours still improves over KD in this setting, the gains are significantly reduced. This highlights that the soft label calibration from the teacher is a key contributor to the effectiveness of counterfactual explanation data. Additionally, when replacing soft labels with random values, performance degrades sharply, likely due to inconsistency with the hard labels, introducing conflicting supervision signals in the training objective. 

\begin{table}[t]
\centering
\caption{\small \textbf{Comparison between KD and CoD variants across varying prompt templates}. \ours is robust to prompt choices, showing low standard deviation across
variants and consistently outperforming the KD baseline in few shot settings.}
\small
\setlength{\tabcolsep}{6pt}
\renewcommand{\arraystretch}{1.1}
\begin{tabular}{lcccccc}
\toprule
\textbf{Method} & \textbf{8} & \textbf{16} & \textbf{32} & \textbf{64} & \textbf{128} & \textbf{512} \\
\midrule
KD & 0.617 & 0.712 & 0.757 & 0.820 & 0.848 & 0.899 \\
+ CoD (v1) & 0.719 & 0.781 & 0.821 & 0.827 & 0.853 & 0.892 \\
+ CoD (v2) & 0.754 & 0.789 & 0.841 & 0.872 & 0.890 & 0.872 \\
+ CoD (v3) & 0.738 & 0.778 & 0.819 & 0.835 & 0.856 & 0.901 \\
+ CoD (v4) & 0.734 & 0.783 & 0.830 & 0.834 & 0.883 & 0.888 \\
\midrule
CoD \textit{(mean)} & 0.736 & 0.783 & 0.828 & 0.842 & 0.870 & 0.888 \\
\textit{(std)} & 0.012 & 0.004 & 0.009 & 0.018 & 0.016 & 0.010 \\
\bottomrule
\end{tabular}
\label{tab:cod_results_prompt}
\end{table}

\begin{table}[t]
\centering
\caption{\small \textbf{Compute and energy use on SST2 for KD+\ours and LWD+\ours.}
Accuracy, runtime, and energy increases with $k$. LWD+\ours is consistently costlier due to intermediate representation alignment.}
\small
\setlength{\tabcolsep}{5pt}
\begin{tabular}{ccccccc}
\toprule
\textbf{Method} & \textbf{$k$} & \textbf{Accuracy} & \textbf{Duration (s)} & \textbf{CPU (kWh)} & \textbf{GPU (kWh)} & \textbf{RAM (kWh)} \\
\midrule
\multirow{6}{*}{KD + \ours}
& 8   & 0.719 & 478.13 & 0.01336 & 0.00966 & 0.02479 \\
& 16  & 0.781 & 488.04 & 0.01341 & 0.00972 & 0.02499 \\
& 32  & 0.821 & 491.14 & 0.01382 & 0.01041 & 0.02547 \\
& 64  & 0.827 & 547.58 & 0.01472 & 0.13620 & 0.02723 \\
& 128 & 0.853 & 569.50 & 0.01639 & 0.01634 & 0.02952 \\
& 512 & 0.892 & 705.21 & 0.03120 & 0.03593 & 0.04536 \\
\midrule
\multirow{6}{*}{LWD + \ours}
& 8   & 0.694 & 485.12 & 0.01263 & 0.01102 & 0.02514 \\
& 16  & 0.785 & 496.07 & 0.01394 & 0.01158 & 0.02572 \\
& 32  & 0.832 & 517.78 & 0.01472 & 0.01245 & 0.02654 \\
& 64  & 0.830 & 536.01 & 0.01515 & 0.01311 & 0.02775 \\
& 128 & 0.835 & 668.52 & 0.01882 & 0.01670 & 0.02960 \\
& 512 & 0.880 & 814.65 & 0.04621 & 0.04712 & 0.04902 \\
\bottomrule
\end{tabular}
\label{tab:compute-ablation}
\end{table}

\begin{table}[t]
\centering
\caption{\small \textbf{Effect of soft-label calibration on downstream performance}. Removing the soft label term ($\alpha = 0$) leads to a substantial drop in performance across all shot levels. Although \ours still improves over KD in this setting, the gains are significantly reduced. This highlights that the soft label calibration from the teacher is a key contributor to the effectiveness of counterfactual explanation data. Additionally, when replacing soft labels with random values, performance degrades sharply, likely due to inconsistency with the hard labels, introducing conflicting supervision signals in the training objective.}
\small
\setlength{\tabcolsep}{5pt}
\begin{tabular}{lcccccc}
\toprule
\textbf{Method (SST2)} & \textbf{8} & \textbf{16} & \textbf{32} & \textbf{64} & \textbf{128} & \textbf{512} \\
\midrule
KD (no soft label, $\alpha{=}0$)          & 0.553 & 0.622 & 0.697 & 0.712 & 0.791 & 0.815 \\
+ CoD (no soft label, $\alpha{=}0$)       & 0.613 & 0.651 & 0.701 & 0.727 & 0.793 & 0.792 \\
KD (random soft label)                    & 0.582 & 0.533 & 0.543 & 0.601 & 0.617 & 0.649 \\
+ CoD (random soft label)                 & 0.573 & 0.548 & 0.552 & 0.602 & 0.623 & 0.632 \\
KD (default)                              & 0.617 & 0.712 & 0.757 & 0.820 & 0.848 & 0.899 \\
+ CoD (default)                           & 0.719 & 0.781 & 0.821 & 0.827 & 0.853 & 0.892 \\
\bottomrule
\end{tabular}
\label{tab:softlabel-ablation}
\end{table}

\end{document}